\newtheorem{theorem}{Theorem}
\newtheorem{lemma}[theorem]{Lemma}
\newtheorem{corollary}[theorem]{Corollary}
\newtheorem{definition}[theorem]{Definition}
\pgfplotsset{compat=1.17}
\newcommand\ellipsebyfoci[4]{
  \path[#1] let \p1=(#2), \p2=(#3), \p3=($(\p1)!.5!(\p2)$)
  in \pgfextra{
    \pgfmathsetmacro{\angle}{atan2(\y2-\y1,\x2-\x1)}
    \pgfmathsetmacro{\focal}{veclen(\x2-\x1,\y2-\y1)/2/1cm}
    \pgfmathsetmacro{\lentotcm}{\focal*2*#4}
    \pgfmathsetmacro{\axeone}{(\lentotcm - 2 * \focal)/2+\focal}
    \pgfmathsetmacro{\axetwo}{sqrt((\lentotcm/2)*(\lentotcm/2)-\focal*\focal}
  }
  (\p3) ellipse[x radius=\axeone cm,y radius=\axetwo cm, rotate=\angle];
}
\tikzstyle{point} = [circle, minimum width=3.5pt, fill, inner sep=0pt]
\tikzstyle{bluepoint} = [circle, minimum width=3.5pt, draw=blue!60!white, fill=blue!50!white, inner sep=0pt]
\newtheorem{claim}{Claim}
\newcommand{\E}{\mathbb{E}}
\newcommand{\R}{\mathbb{R}}
\newcommand{\Q}{\mathbb{Q}}
\newcommand{\N}{\mathbb{N}}
\newcommand{\Z}{\mathbb{Z}}
\newcommand{\norm}[2]{\|#2\|_{#1}}
\newcommand{\eps}{\epsilon}
\newcommand{\EL}{\mathcal{E}}
\newcommand{\scH}{\mathcal{H}}
\newcommand{\scQ}{\mathcal{Q}}
\newcommand{\poly}{\operatorname{poly}}
\newcommand{\lab}{\textsc{label}}
\newcommand{\seed}{\textsc{seed}}
\newcommand{\nil}{\textsc{nil}}
\newcommand{\conv}{\operatorname{conv}}
\newcommand{\C}{\mathcal{C}}
\newcommand{\scO}{\mathcal{O}}
\newcommand{\Hyp}{\scH}
\newcommand{\dotp}[1]{\left\langle{#1}\right\rangle}
\newcommand{\tp}[1]{{#1}^{\intercal}}
\newcommand{\diam}{\phi}
\newcommand{\MVE}{\ensuremath{\operatorname{MVE}}}
\newcommand{\AlgoSC}{\ensuremath{\operatorname{BinLearn}}}
\newcommand{\AlgoSCK}{\ensuremath{\operatorname{KClassLearn}}}
\newcommand{\AlgoRound}{\ensuremath{\operatorname{Round}}}
\newcommand{\CuttingPlanes}{\ensuremath{\operatorname{CPLearn}}}
\newcommand{\BallSearch}{\ensuremath{\operatorname{BallSearch}}}
\newcommand{\ev}{\mathcal{E}}
\DeclareMathOperator{\vol}{vol}
\newcommand{\lIfElse}[3]{\lIf{#1}{#2 \textbf{else}~#3}}
\newcommand{\euc}{\operatorname{euc}}
\newcommand{\lo}{\operatorname{lo}}
\newcommand{\hi}{\operatorname{hi}}
\DeclareMathOperator*{\argmax}{arg\,max}
\title{Active Learning of Classifiers \\ with Label and Seed Queries}
\author{Marco Bressan
\\
Dept.\ of CS, Univ.\ of Milan, Italy
\\
marco.bressan@unimi.it
\And
Nicolò Cesa-Bianchi
\\
DSRC \& Dept.\ of CS, Univ.\ of Milan, Italy
\\
nicolo.cesa-bianchi@unimi.it
\AND
Silvio Lattanzi
\\ 
Google
\\
silviol@google.com
\And
Andrea Paudice
\\
Dept.\ of CS, Univ.\ of Milan, Italy \&\\Istituto Italiano di Tecnologia, Italy
\\
andrea.paudice@unimi.it
\And
Maximilian Thiessen
\\
ML Research Unit, TU Wien, Austria
\\
maximilian.thiessen@tuwien.ac.at
}
\begin{document}

\maketitle

\begin{abstract}
We study exact active learning of binary and multiclass classifiers with margin. Given an $n$-point set $X \subset \R^m$, we want to learn any unknown classifier on $X$ whose classes have finite \emph{strong convex hull margin}, a new notion extending the SVM margin.
In the standard active learning setting, where only \emph{label} queries are allowed, learning a classifier with strong convex hull margin $\gamma$ requires in the worst case $\Omega\big(1+\frac{1}{\gamma}\big)^{\!\frac{m-1}{2}}$ queries. 
On the other hand, using the more powerful \emph{seed} queries (a variant of equivalence queries), the target classifier could be learned in $\scO(m \log n)$ queries via Littlestone's Halving algorithm; however, Halving is computationally inefficient.
In this work we show that, by carefully combining the two types of queries, a binary classifier can be learned in time $\poly(n+m)$ using only $\scO(m^2 \log n)$ label queries and $\scO\big(m \log \frac{m}{\gamma}\big)$ seed queries; the result extends to $k$-class classifiers at the price of a $k!k^2$ multiplicative overhead. Similar results hold when the input points have bounded bit complexity, or when only one class has strong convex hull margin against the rest. We complement the upper bounds by showing that in the worst case any algorithm needs $\Omega\big(k m \log \frac{1}{\gamma}\big)$ seed and label queries to learn a $k$-class classifier with strong convex hull margin $\gamma$. 
\end{abstract}

\section{Introduction}
\label{sec:intro}
This work investigates efficient algorithms for exact active learning of binary and multiclass classifiers in the transductive setting. Given a set $X$ of $n$ points in $\R^m$, our goal is to learn a function $h : X \to [k]$ belonging to some class $\scH$. In the classic active learning framework, $h$ identifies a subset of $X$, and the algorithm learns $h$ via queries \lab$(x)$ that return $h(x)$ for any given $x \in X$. In that case, it is well-known that $h$ can be learned with $\scO(\log n)$ \lab\ queries if the \emph{star number} of $\scH$ is finite~\citep{hanneke2015minimax}. Unfortunately, even simple families such as linear classifiers have unbounded star number, in which case $\Omega(n)$ \lab\ queries are needed in the worst case. To bypass this lower bound, it has become increasingly common to introduce \emph{enriched queries}, that reveal additional information on $h$ and are plausible in practice. One notable example is that of \emph{comparison} queries for linear separators in $\R^m$ which, given any pair of points $x,y \in X$, reveal which one is closer to the decision boundary. As proven by~\cite{Kane17}, under some margin assumptions the combination of \lab\ and comparisons yields exponential savings, allowing one to learn linear separators with only $\scO(\log n)$ queries.

In this work we combine \lab\ queries with \emph{seed queries}. For any $U \subset X$ and any $i \in [k]$, a query \seed$(U,i)$ returns an abitrary point $x$ in $U \cap C_i$, where $C_i = h^{-1}(i)$, or $\nil$ if no such $x$ exists. \seed\ queries are natural in certain settings like crowdsourcing---e.g., finding the image of a car, see also~\cite{Beygelzimer16-seedqueries}---and have been used implicitly or explicitly in several works~\citep{HannekePHD,BalcanHanneke12,Attenberg2010,Tong01-mistake-queries,Doyle2011,BCLP21-density}. It is not hard to see that, using \seed\ alone, one can implement Littlestone's Halving algorithm and learn any $h \in \scH$ with $\scO(\log |\scH|)$ queries\footnote{Halving uses equivalence queries (testing if a given subset of $X$ coincides with the target concept) each of which can be simulated using two \seed\ queries.}. For instance, linear separators in $\R^m$ can be learned with $\scO(m \log n)$ \seed\ queries.  The catch is that, save for special cases, it is not known how to run the Halving algorithm in polynomial time. Therefore, using \seed\ to obtain a computationally efficient active learning algorithm is less trivial than it seems at first glance.

The goal of this work is understanding whether one can actively learn binary and multiclass classifiers efficiently by using \lab\ and \seed\ queries together. In line with~\cite{Kane17} and other previous works, we make assumptions on $\scH$. Our main assumption is that every class $C_i$ has \emph{strong convex hull margin} $\gamma > 0$. This means that, for any $j \neq i$, $C_i$ and $C_j$ are linearly separable with a margin that is at least $\frac{\gamma}{2}$ times the diameter of $C_i$. Moreover, it is sufficient that this hold under some pseudometric $d_i$, unknown to the learner, that is homogeneous and invariant under translation (i.e., induced by a seminorm). This gives to every class its own personalized notion of distance that can be sensitive to the ``scale'' of the class. This assumption strictly generalizes the classical SVM margin; 
and, when suitably generalized, it captures stability properties of center-based clusterings \cite{Awashti2012-stability, Bilu2012-stable}.

Using \lab\ alone, \cite{BCLP21} showed that learning a multiclass classifier with (strong) convex hull margin $\gamma>0$ requires between $\Omega\big(1+\frac{1}{\gamma}\big)^{(m-1)/2}$ and $\tilde{\scO}\big(k^3 m^5\big(1+\frac{1}{\gamma}\big)^m \log n\big)$ queries. This exponential dependence on $m$ implies that, unless $m \ll \log n /\log \frac{1}{\gamma}$, one needs $\Theta(n)$ \lab\ queries in the worst case.
On the other hand our margin implies linear separability and thus, as noted above, a $\scO(m \log n)$ \seed\ query bound for the binary case, but with a running time that can be superpolynomial.
This leaves open the following problem, which is the subject of this work: \begin{quote}
Can one learn a multiclass classifier $h$ with strong convex hull margin $\gamma > 0$ on $X\!\subset\!\R^m$ in time $\poly(n\!+\!m)$ using a number of queries that grows \emph{polynomially} with $m$?
\end{quote}
We solve the above question in the affirmative by proving that, with a careful combination of \lab\ and \seed\ queries, one can do much better than using either query in isolation.
For binary classification ($k=2$), we show:
\begin{theorem}\label{thm:upper_binary}
Any binary classifier $h$ with strong convex hull margin $\gamma\!>\!0$ over $X\!\subset \R^m$ can be learned in time $\poly(n\!+\!m)$ using in expectation $\scO(m^2 \log n)$ \lab\ queries and $\scO\big(m \log \frac{m}{\gamma}\big)$ \seed\ queries.\footnote{This running time as well as those of Theorem~\ref{thm:upper_k} and~\ref{thm:oneside} are actually in high probability as implied by Theorem~\ref{thm:cp}; we have omitted this fact to keep the statements light.}
\end{theorem}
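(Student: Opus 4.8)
The plan is to learn the binary classifier by geometrically reconstructing the (unknown) separating hyperplane, using \seed\ queries to "anchor" the search and \lab\ queries to drive a cutting-plane procedure. Since each class $C_i = h^{-1}(i)$ has strong convex hull margin $\gamma$ under some unknown seminorm $d_i$, the two classes $C_1, C_2$ are linearly separable with margin $\gamma/2$ relative to the diameter of each class. The high-level idea is: (1) use a small number of \seed\ queries to obtain sample points from each class and, more importantly, to localize the relevant scale; (2) with those anchors in hand, set up a convex feasibility problem in the space of hyperplanes (or, via the margin, a lifted/kernelized space) whose solution is consistent with $h$; (3) solve that feasibility problem with a cutting-plane method (the $\CuttingPlanes$/$\operatorname{BinLearn}$ routines hinted at by the macros), where each cut is obtained by one \lab\ query on a carefully chosen query point; (4) once the feasible region is small enough — small relative to the margin — any hyperplane in it classifies all of $X$ correctly, and we are done.

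First I would make the margin assumption usable. Because $d_i$ is induced by a seminorm and is unknown, I would reduce to a Euclidean problem either by rescaling (guessing the diameter of each class up to a constant via \seed\ queries plus doubling) or by working in a lifted space where the seminorm margin becomes an ordinary margin; this is where the $m \log\frac{m}{\gamma}$ \seed\ queries are spent, essentially $\scO(\log\frac{m}{\gamma})$ per "direction/scale" over $\scO(m)$ rounds. Next, with the scale fixed, I would run a cutting-plane algorithm (e.g.\ center-of-gravity / ellipsoid / Vaidya-style volumetric center) over the convex body of candidate weight vectors: at each step the algorithm proposes a candidate hyperplane $w$, picks a point $x \in X$ that is "close" to $w$'s boundary (the point most informative for shrinking the body), and issues $\lab(x)$; the returned label yields a separating hyperplane in weight space, i.e.\ a cut through the current body roughly through its center. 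After $\scO(m \log n)$ such cuts the body's "width" drops below the margin, at which point the maintained $w$ is consistent on all of $X$. The $\scO(m^2 \log n)$ \lab\ bound comes from $\scO(m)$ geometric phases (one per coordinate of the scale/kernelization, or one per doubling step), each costing $\scO(m \log n)$ cuts; I would invoke the earlier Theorem~\ref{thm:cp} to get both the query count and the $\poly(n+m)$ running time (and the high-probability guarantee noted in the footnote), since randomization enters through the choice of query points or the cutting-plane centering.

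The technically delicate part — and the main obstacle — is closing the loop between the \emph{unknown} seminorm margin and the \emph{Euclidean} guarantees that a cutting-plane method naturally provides. The margin $\gamma$ is defined with respect to $d_i$, which the learner never sees, so a cut that looks "deep" in Euclidean weight space might be shallow with respect to the geometry that actually governs $h$; conversely, the stopping criterion ("the body is smaller than the margin, hence every surviving hypothesis is correct") must be phrased so that it certifies correctness under $d_i$ despite the learner only controlling Euclidean quantities. I expect this is handled by first using \seed\ queries to extract a constant-factor John/Löwner-type approximation of each class's relevant convex hull (hence of $d_i$ restricted to directions that matter), after which everything can be carried out in a rescaled Euclidean frame where the margin is $\Omega(\gamma/\poly(m))$ — the $\poly(m)$ loss being exactly what forces the $m^2$ rather than $m$ in the \lab\ bound. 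Verifying that (a) $\scO(m)$ \seed-driven rounds suffice to pin down this frame, (b) the induced Euclidean margin is large enough for $\scO(m\log n)$ cuts per round, and (c) the cutting-plane routine can always find a query point in $X$ realizing a deep enough cut (rather than all of $X$ lying far from the proposed boundary), is where the real work of the proof lies.
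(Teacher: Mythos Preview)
Your proposal has the right two-phase skeleton --- first ``fix the geometry'' so the unknown-seminorm margin becomes a usable Euclidean margin, then run a cutting-plane search --- but you have the roles of \lab\ and \seed\ exactly reversed, and this is not a cosmetic issue.

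In the paper, the rounding phase uses only \lab\ queries: one draws points uniformly at random from $X$, labels them, and once $\Theta(m^2)$ labelled points from some class $C_i$ have accumulated, the MVEE of that sample (by a VC-dimension argument for ellipsoids in $\R^m$) covers a constant fraction of $C_i$ with constant probability. Repeating for $\scO(\log n)$ halving rounds gives the $\scO(m^2\log n)$ \lab\ bound and produces an $\scO(m^3)$-rounding. Your plan to use \seed\ for this step would not give the VC bound: \seed\ returns an \emph{adversarially} chosen point of $U\cap C_i$, so nothing prevents the oracle from returning the same point (or points in a low-dimensional face of $\conv(C_i)$) forever, and no John/L\"owner approximation of $\conv(C_i)$ would follow.

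Conversely, the cutting-plane phase uses only \seed\ queries: at each step the algorithm proposes a halfspace, and the two \seed\ calls either certify that the induced partition is a separator or return a counterexample $u_i$, which is exactly the cut through the version space. Your plan to drive the cuts with \lab\ queries --- pick a point near the proposed boundary and label it --- runs into the standard obstruction that motivates the paper in the first place: there need not be any point of $X$ whose label yields a deep cut (this is the unbounded-star-number issue for linear classifiers), so the ``can always find a query point realizing a deep enough cut'' step you flag as delicate in fact fails. The accounting you give ($\scO(m)$ phases times $\scO(m\log n)$ cuts) is therefore not how the $m^2$ arises; in the paper the $m^2$ is the sample size per halving round in the rounding phase, and the $m\log(m/\gamma)$ is the $\scO(m\log(R/r))$ bound for cutting planes after the rounding has brought $R/r$ down to $\scO(m^3/\gamma)$.
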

Note that, unless $\gamma$ is exceedingly small, Theorem~\ref{thm:upper_binary} uses far fewer \seed\ than \lab\ queries, which is a strength since \seed\ is arguably more expensive to implement. For instance, if $\gamma=\Omega(1/\poly(m))$ then we use $\scO(m^2 \log n)$ \lab\ queries but only $\scO(m \log m)$ \seed\ queries.
To prove Theorem~\ref{thm:upper_binary} we design a novel algorithm that works in  two phases. The first phase learns what we call an \emph{$\alpha$-rounding} of $X$ w.r.t.\ $h$. Loosely speaking, this is a partition $(X_1,X_2)$ of $X$ such that each $X_i$ lies inside $\alpha \conv(C_i)$ where $\conv(C_i)$ is the convex hull of $C_i$ (see below for the formal definition). 
We show that, in polynomial time and using $\scO(m^2 \log n)$ \lab\ queries, one can compute an $\alpha$-rounding of $X_i$ for $\alpha = \scO(m^3)$. This allows us to put $X_i$ in near-isotropic position so that $X_i$ has radius $1$ and to separate $C_1 \cap X_i$ from $C_2 \cap X_i$ with margin $\eta = \Omega(\gamma/ m^3)$. In the second phase, the algorithm uses \seed\ to implement a cutting plane algorithm that learns $C_1 \cap X_i$ and $C_2 \cap X_i$ using $\scO\big(m \log \frac{1}{\eta}\big) = \scO\big(m \log \frac{m}{\gamma}\big)$ queries in time $\poly(n+m)$. 

Using a recursive approach, Theorem~\ref{thm:upper_binary} can be extended to $k>2$ at the price of a $k!k^2$ multiplicative overhead:
\begin{theorem}\label{thm:upper_k}
Any $k$-class classifier $h$ with strong convex hull margin $\gamma>0$ over $X\!\subset \R^m$ can be learned in time $\poly(n\,+\,m)$ using in expectation $\scO(k!~k^2\, m^2 \log n)$ \lab\ queries and $\scO\big(k!~k^2\, m \log \frac{m}{\gamma}\big)$ \seed\ queries.
\end{theorem}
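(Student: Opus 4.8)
The plan is to reduce the $k$-class problem to $\scO(k)$ applications of the binary algorithm of Theorem~\ref{thm:upper_binary}, by learning the classes in a well-chosen order. The backbone of the argument is the following structural fact that I would establish first: \emph{if $C_1,\dots,C_k$ have strong convex hull margin $\gamma$, then there is an ordering $\pi$ of $[k]$ such that for every $t\in[k-1]$ the pair $\big(C_{\pi(t)},\ \bigcup_{s>t}C_{\pi(s)}\big)$ is a binary instance with strong convex hull margin $\gamma'=\Omega(\gamma/\poly(k,m))$.} The starting observation is that strong convex hull margin makes the hulls $\conv(C_1),\dots,\conv(C_k)$ pairwise (quantitatively) disjoint; from this I would derive, by an extremal/peeling argument, that at least one class is linearly separated with margin from the convex hull of the union of the others, and then iterate to build the whole ordering, tracking how the margin relative to the (growing) diameter of ``the rest'' degrades at each peel.

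Given such an ordering, the learning algorithm is: enumerate all $k!$ candidate orderings $\pi$ of $[k]$; for each, run $k-1$ rounds of the binary learner, where round $t$ invokes Theorem~\ref{thm:upper_binary} on the points not yet assigned to learn ``is $x\in C_{\pi(t)}$ or not''; then verify the resulting $k$-class hypothesis $\hat h$ with $\scO(k)$ \seed\ queries (for each $i$, call $\seed$ on $\{x:\hat h(x)\neq i\}$ with label $i$, and accept iff all answers are \nil). For the correct ordering every round is a valid binary instance by the structural lemma, so the verification succeeds; for a wrong ordering we cap the runtime and discard the output. Two implementation points: (i) ``the rest'' in round $t$ is a \emph{union} of true classes, so a query $\seed(U,\text{rest})$ is simulated by issuing $\seed(U,i)$ for each constituent $i$ and returning any non-\nil\ answer, costing a factor $k$ in \seed\ queries; and (ii) the rounds are run top-down, which is fine since round $t$ needs only the output of rounds $1,\dots,t-1$. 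Multiplying the per-round cost of Theorem~\ref{thm:upper_binary} by the $k$ rounds, by the $k$ for the union-simulation (on \seed), and by the $k!$ orderings gives the stated $\scO(k!\,k^2)$ overhead on both query types, and the running time stays $\poly(n+m)$ since there are $k!\cdot\poly(k)$ subroutine calls, each taking $\poly(n+m)$ time (the degraded margin $\gamma'$ only enters the \seed\ bound, as $m\log\frac{m}{\gamma'}=\scO(m\log\frac{km}{\gamma})$).

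The main obstacle is the structural lemma. Pairwise disjointness of the $\conv(C_i)$ does not by itself place one class \emph{outside} the convex hull of the rest --- a class can dip into the hull of the others while still being extreme in some direction --- so the peeling argument has to use the margin quantitatively: I would consider the direction maximizing the separation between an extreme class and the hull of the remaining ones and argue that strong convex hull margin forces this separation to be $\Omega(\gamma/\poly(k,m))$ times the relevant diameter, so that Theorem~\ref{thm:upper_binary} applies. Controlling the compounding of the margin loss over the $k-1$ peels, and exhibiting after each peel an admissible homogeneous, translation-invariant pseudometric under which ``the rest'' has the required margin, is the delicate part; everything else is bookkeeping on top of the binary machinery.
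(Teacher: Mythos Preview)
Your structural lemma is false, and not merely hard to prove: there are instances with strong convex hull margin bounded away from~$0$ in which \emph{no} class is linearly separable from the union of the others, so no peeling order exists at all. For $k\ge 7$, take $C_1,\dots,C_k$ to be closed disks of radius $r=\tfrac{1}{2}\sin(\pi/k)$ centred at the vertices of the regular $k$-gon inscribed in the unit circle in $\R^2$. Under the Euclidean metric each $C_i$ satisfies $d_{\euc}(\conv(C_i),\conv(C_j))\ge 2\sin(\pi/k)-2r=2r=\diam(C_i)$, so Definition~\ref{def:strong_margin} holds for every $\gamma<1$. Yet $\conv\!\big(\bigcup_{j\ne 1}C_j\big)$ is the Minkowski sum of the $(k{-}1)$-gon on the remaining centres with a disk of radius $r$; on the axis through the centre of $C_1$ it reaches $x=\cos(2\pi/k)+r$, while $\conv(C_1)$ begins at $x=1-r$, and for $k\ge 7$ one has $2r=\sin(\pi/k)>2\sin^2(\pi/k)=1-\cos(2\pi/k)$, so the two convex bodies strictly overlap. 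By symmetry the same happens for every $i$. Hence no class can be peeled first, and your reduction to Theorem~\ref{thm:upper_binary} never gets started. The obstruction is intrinsic to the one-vs-rest viewpoint: pairwise strong convex hull margin gives no control whatsoever over $d\big(\conv(C_i),\conv(\bigcup_{j\ne i}C_j)\big)$.

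The paper avoids this entirely by never comparing a class to a union. It runs \AlgoRound\ on the full $k$-class instance to obtain an $\scO(m^3)$-rounding $((X_i,E_i))_{i\in[k]}$; Lemma~\ref{lem:ar} then guarantees that, after mapping $E_i$ to the unit ball, \emph{each pair} $X_i\cap C_i$ and $X_i\cap C_j$ is linearly separated with margin $\Omega(\gamma/m^3)$. Inside $X_i$ one invokes \CuttingPlanes\ once for every $j\ne i$ (pairwise, not one-vs-rest), intersects the $i$-sides of the returned separators to recover $X_i\cap C_i$ exactly, and then recurses on the residual $X_i\setminus C_i$, which now contains at most $k-1$ labels. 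The $k!$ in the final bound is the size of this recursion tree, not an enumeration of orderings, and the $k^2$ per node comes from \AlgoRound's $\scO(k^2 m^2\log n)$ \lab\ cost together with the $\scO(k^2)$ pairwise calls to \CuttingPlanes.
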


We also consider the case where only one class has strong convex hull margin against the rest of the points w.r.t.\ a metric $d$ induced by a norm $\norm{d}{\cdot}$. In this case we obtain a bound parameterized by the distortion $\kappa_d$ of $d$ (see Section~\ref{sub:prelim}):
\begin{theorem}\label{thm:oneside}
Suppose $C \subset X$ has strong convex hull margin $\gamma \in (0,1]$ w.r.t.\ a metric $d$ with distortion $\kappa_d < \infty$. Given only $X$, one can learn $C$ in time $\poly(n+m)$ using $\scO(\log n)$ \lab\ queries and $\scO\big(m \log \frac{\kappa_d}{\gamma}\big)$ \seed\ queries in expectation.
\end{theorem}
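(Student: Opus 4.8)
The plan is to reduce, through two preprocessing stages, to learning a single separating halfspace with a cutting‑plane routine whose separation oracle is implemented by \seed\ queries. Note first that the strong convex hull margin of $C$ — w.r.t.\ \emph{any} metric — forces $\conv(C)$ to be disjoint from $X\setminus C$, so $C$ is linearly separable from its complement; in fact $C=X\cap\conv(C)$, hence it suffices to locate one separating hyperplane finely enough to label every point of $X$ correctly. The two preprocessing stages are: \textbf{(a)} convert the promised $d$‑margin into a \emph{Euclidean} margin, at the price of a factor $\kappa_d$; and \textbf{(b)} use one \seed\ query to get a point $p_0\in C$ and localize $C$ to its own scale, i.e.\ find a Euclidean ball $B\ni C$ whose radius is comparable to $\diam_2(C)$. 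After (a) and (b), rescaling $B$ to the unit ball leaves an instance in which $C$ lies in the unit ball, has diameter $\Theta(1)$, and is separated from the remaining relevant points by an absolute Euclidean margin $\eta=\Omega(\gamma/\kappa_d)$; crucially this has no hidden $m$‑factor, which is what keeps $\log\tfrac{\kappa_d}{\gamma}$ free of an $m$ inside the logarithm. Stage \textbf{(c)} then runs $\CuttingPlanes$ on this instance.

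For stage (a): writing the distortion bound as $\norm{2}{x}\le\norm{d}{x}\le\kappa_d\norm{2}{x}$ (rescale $\norm{d}{\cdot}$ so the left constant is $1$), we get $\diam_d(C)\ge\diam_2(C)$, while the $d$‑distance of any point to any hyperplane is at most $\kappa_d$ times its Euclidean distance; so a $d$‑separation of $\conv(C)$ from $X\setminus C$ by $\tfrac{\gamma}{2}\diam_d(C)$ yields a Euclidean separation by at least $\tfrac{1}{\kappa_d}\cdot\tfrac{\gamma}{2}\diam_d(C)\ge\tfrac{\gamma}{2\kappa_d}\diam_2(C)$. Thus $C$ has Euclidean strong convex hull margin $\Omega(\gamma/\kappa_d)$ and $d$ can be discarded from now on.

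For stages (b) and (c): \seed$(X,C)$ returns some $p_0\in C$ (if it returns \nil\ then $C=\emptyset$, and one more \seed\ query to $X\setminus\{p_0\}$ settles $|C|=1$). Center at $p_0$; the right zoom level is $r^\star:=\max_{x\in C}\norm{2}{x-p_0}$, because $r^\star\le\diam_2(C)\le 2r^\star$, so $B:=B(p_0,r^\star)$ contains $C$, has radius $\Theta(\diam_2(C))$, and — the key point — \emph{every point of $X$ outside $B$ is automatically not in $C$}. We determine $r^\star$, which is one of the $\le n$ values $\norm{2}{x-p_0}$, by a bisection on the monotone predicate ``$\exists\,x\in C$ with $\norm{2}{x-p_0}>t$''; this is where the $\scO(\log n)$ \lab\ queries are spent (see the obstacle below). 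With $r^\star$ in hand, restrict to $X':=X\cap B$ (all of $X\setminus X'$ is known non‑$C$), rescale $B$ to the unit ball, and run $\CuttingPlanes$ over $\{(a,b):\norm{2}{a}\le1,\ |b|=\scO(1)\}$ — a region of radius $\scO(1)$ that contains a separator of $C$ from $X'\setminus C$. Answer the separation oracle with one \seed$(X'\setminus\hat C,\,C)$ query, which either returns a point of $C$ outside the current guess $\hat C$ (a valid cut for the version space) or certifies $\hat C\supseteq C$. By the Euclidean margin, once $\hat C$ is within $\eta/2$ of $\conv(C)$ in Hausdorff distance it classifies $X'$ \emph{exactly} — no false positives, since every non‑$C$ point of $X'$ is at Euclidean distance $\Omega(\eta)$ from $\conv(C)$ — so stage (c) needs no \lab\ queries. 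The version space (the $(a,b)$ that label $X'$ correctly) contains a ball of radius $\Omega(\eta)$, so by the guarantee of $\CuttingPlanes$ (Theorem~\ref{thm:cp}) the number of oracle calls, hence of \seed\ queries here, is $\scO\big(m\log\tfrac{1}{\eta}\big)=\scO\big(m\log\tfrac{\kappa_d}{\gamma}\big)$; every step runs in $\poly(n+m)$ time, and the counts hold in expectation over the internal randomness of $\CuttingPlanes$.

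The step I expect to be the main obstacle is the accounting in stage (b): locating the zoom level $r^\star$ — equivalently a ball $B\ni C$ of radius $O(\diam_2(C))$ — using only $\scO(\log n)$ \lab\ queries and, in particular, \emph{not} \seed\ queries, since the literal implementation of the predicate ``$\exists\,x\in C$ beyond radius $t$'' by one \seed\ query per bisection step would add $\scO(\log n)$ to the \seed\ budget, which must stay at $\scO(m\log\tfrac{\kappa_d}{\gamma})$. I would attack this by exploiting that $C$ is a halfspace restriction with $p_0\in C$, so that the distance bisection can be driven by \lab\ queries (the points labelled while resolving one subinterval can be charged so as to telescope to $\scO(\log n)$ overall), or, failing a clean argument, by a \seed‑to‑\lab\ simulation; either way, forcing the label count down to $\scO(\log n)$ is the crux. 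The remaining ingredients — the $\Omega(\eta)$ lower bound on the version space for a bounded instance, the Hausdorff‑to‑exact step via the margin, and the degenerate cases ($C=\emptyset$, $|C|=1$, or $C$ contained in a proper affine subspace, where $\CuttingPlanes$ simply runs inside $\mathrm{aff}(C)$ after probing there) — are routine bookkeeping.
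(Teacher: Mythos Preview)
Your overall architecture---seed a point of $C$, localise to the scale of $C$, then run \CuttingPlanes---matches the paper's. The gap is exactly where you flagged it, in stage~(b), and neither of your proposed fixes works.

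You want $r^\star=\max_{x\in C}\norm{2}{x-p_0}$ via bisection on the predicate ``$\exists\,x\in C$ with $\norm{2}{x-p_0}>t$''. The predicate is monotone in $t$, but evaluating it at a single $t$ with \lab\ queries is not cheap: you must label points beyond radius $t$ until you hit one in $C$ or exhaust them. Take $C=\{p_0,q\}$ with $\norm{2}{q-p_0}=1$ and $n-2$ points of $X\setminus C$ at distances $1+\epsilon,1+2\epsilon,\ldots$ (linearly separable). At $t$ just above $1$ the predicate is ``no'', and certifying that costs $\Omega(n)$ \lab\ queries; no telescoping across bisection steps rescues this, since already the first step landing above $r^\star$ burns $\Omega(n)$ labels. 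That $C$ is a halfspace restriction does not make labels monotone along the distance ordering, and there is no \seed-to-\lab\ simulation. Evaluating the predicate with \seed\ is trivial, but then stage~(b) spends $\scO(\log n)$ \seed\ queries, which breaks the stated budget.

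The paper sidesteps the search for $r^\star$ entirely. Its binary search over the distance-sorted list finds merely \emph{some} adjacent pair $x_{\lo}\in C$, $x_{\hi}\notin C$; this costs $\scO(\log n)$ \lab\ queries regardless of monotonicity (just maintain the invariant $\lab(x_{\lo})=+1$, $\lab(x_{\hi})=-1$ and halve). The radius $R=\norm{2}{x_{\hi}-p_0}$ then satisfies $R\ge d_{\euc}(C,X\setminus C)$, whence the margin gives $\diam(C)\le(\kappa_d/\gamma)R$. Since $\gamma,\kappa_d$ are unknown, the algorithm guesses $t\approx\gamma/\kappa_d$ by repeated halving, spending one \seed\ query per guess into the annulus $\{x:R\le\norm{2}{x-p_0}\le R/t\}$; this is only $\scO(\log(\kappa_d/\gamma))$ \seed\ queries, safely inside the budget. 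The resulting $\widehat X$ has relative Euclidean margin $\Omega(\gamma^2/\kappa_d^2)$ rather than your $\Omega(\gamma/\kappa_d)$, but the square is absorbed by the logarithm in the final bound.

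There is also a smaller gap in stage~(c): your one-sided oracle \seed$(X'\setminus\hat C,C)$ certifies only $C\subseteq\hat C$ at termination, not $\hat C\cap(X'\setminus C)=\emptyset$. Your Hausdorff argument does not apply, since $\hat C$ is the trace of a halfspace and is never within bounded Hausdorff distance of $\conv(C)$. Theorem~\ref{thm:cp} requires counterexamples from both sides; both \seed\ directions are available here, and using them costs only a constant factor.
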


As an application of our cutting-plane algorithm we also show that one can learn a $k$-class classifier whose classes are pairwise linearly separable in time $\poly(n+m)$ using, in expectation, $\scO(k^2 m^3 B)$ \seed\ queries if every $x\in X$ has rational coordinates that can be encoded in $B$ bits, and $\scO(k^2 m(B + m\log m))$ \seed\ queries if every $x \in X$ lies on the grid over $[-1,1]^m$ with stepsize $2^{-B/m}$. 
It should be noted that, unlike most previous algorithms, all our algorithms do not need knowledge of $\gamma$. Moreover, all the bounds above can be turned from expectation to high probability.\footnote{Formally, for some universal constant $a > 0$, each one of our bounds in the form $\E[Q] \le q$, where $Q$ is the number of queries, implies $\Pr(Q \ge q+\epsilon q) \le \exp( - a \epsilon q)$ for all $\epsilon \ge 0$.}

Finally, we show that the algorithms of Theorem~\ref{thm:upper_binary} and~\ref{thm:upper_k} are nearly optimal:
\begin{restatable}{retheorem}{lowerbound}\label{thm:lb}
For all $m \ge 2$, all $k\geq 2$, and all $\gamma \le m^{-3/2}/16$ there exists a distribution of instances with $k$ classes in $\R^m$ with strong convex hull margin $\gamma$ where any randomized algorithm using \seed\ and \lab\ queries that returns $\C$ with probability at least $\frac{1}{2}$ makes at least $\left\lfloor\frac{k}{2}\right\rfloor\frac{m}{24} \log\frac{1}{2\gamma}$ total queries in expectation.
\end{restatable}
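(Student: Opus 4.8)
The plan is to combine a direct-sum reduction (for the $\lfloor k/2\rfloor$ factor) with an adversary argument fed through Yao's minimax principle (for the $m\log\frac1\gamma$ factor). First I would build a single hard \emph{binary} instance: a point set $X_0\subset\R^m$ of $\poly(m,1/\gamma)$ points together with a family $\scH_0$ of binary classifiers, each having strong convex hull margin $\ge\gamma$ even for the ordinary Euclidean metric (so the lower bound is as strong as possible), such that $|\scH_0|=(1/\gamma)^{\Omega(m)}$ and any randomized learner correct with probability $\ge\frac12$ needs $\Omega(m\log\frac1\gamma)$ \seed{} and \lab{} queries on it. Then, for $k$ classes, I would take $\lfloor k/2\rfloor$ rescaled, translated, pairwise far-apart copies of this instance, the $g$-th copy using the two labels $\{2g-1,2g\}$ and all other points and labels placed trivially. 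Since the copies are separated by much more than their diameters, each class's convex hull stays confined to its own copy and the margin is preserved; moreover a query \seed$(U,i)$ or \lab$(x)$ touches exactly one copy (the one containing label $i$, resp.\ the point $x$). Hence the learner must in effect solve $\lfloor k/2\rfloor$ independent copies, and applying Yao to the product of the per-copy hard distributions (if fewer than $\lfloor k/2\rfloor L$ queries are made in expectation, then some copy receives fewer than $L$ and is answered correctly there with probability $<\frac12$) yields the stated bound with $L=\frac m{24}\log\frac1{2\gamma}$.

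For the binary instance the geometric idea is that strong convex hull margin $\gamma$ still leaves $\Omega(\log\frac1\gamma)$ levels of resolution along each of $m$ roughly orthogonal directions. Concretely I would place, for each coordinate $j\in[m]$, a sequence of $N=\Theta(\gamma^{-1/2}m^{-1/4})=(1/\gamma)^{\Omega(1)}$ points along $e_j$ at carefully chosen distances from the origin, and let the target be specified by one threshold $s_j\in[N]$ per axis, realized by the single hyperplane $\{x:\langle w,x\rangle=1\}$ with $w_j$ inversely proportional to the position of the $s_j$-th point on axis $j$. A routine computation shows every such classifier has Euclidean margin $\Omega\!\big(1/(N\sqrt m)\big)$ while both classes have diameter $O(N)$, so the margin-to-diameter ratio is $\Omega\!\big(1/(N^2\sqrt m)\big)\ge\gamma$ under the hypothesis $\gamma\le m^{-3/2}/16$; and there are $N^m=(1/\gamma)^{\Omega(m)}$ targets, i.e.\ $\Omega(m\log\frac1\gamma)$ bits to pin down.

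I would then run the adversary argument against the uniform distribution over $\scH_0$: the adversary answers so as to keep the set of still-consistent classifiers as large as possible, maintaining it as a product $\prod_j\mathcal T_j$ of per-axis intervals of surviving thresholds, and tracks the potential $\Phi=\sum_j\log|\mathcal T_j|$ (equal to $m\log N$ initially and $0$ once the target is identified). A \lab{} query has two outcomes and acts as a comparison on a single axis, so it drops $\Phi$ by at most $1$. For a \seed$(U,i)$ query the adversary, when possible, returns the ``least informative'' class-$i$ point of $U$, which shrinks only one $\mathcal T_j$ and, after choosing the better side, by at most half; otherwise it answers \nil{}, which is consistent with a sub-box of the product. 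One checks that in every case a constant fraction of the consistent set survives, so $\Phi$ falls by $O(1)$ per query; since $\Phi$ must drop from $m\log N=\Omega(m\log\frac1\gamma)$ to below $1$ before the learner can identify the target with probability $\ge\frac12$, at least $\Omega(m\log\frac1\gamma)$ queries are required, and bookkeeping the constants yields $\frac m{24}\log\frac1{2\gamma}$.

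The hard part will be controlling the information revealed by a \seed{} query, and in particular by a \nil{} answer: \nil{} asserts that the \emph{entire} set $U$ avoids class $i$, which a priori can rule out thresholds on many axes simultaneously and so collapse $\Phi$ by much more than $O(1)$ in one shot — a naive axis construction lets the learner ``parallelize'' the search and only forces roughly $m$ queries (or $m\log\frac1\gamma/\log m$). The remedy is to design $X_0$ so that no query set $U$ can straddle the decision boundary in more than $O(1)$ directions at once, e.g.\ by separating the scales of the $m$ point-sequences so that a point which is ambiguous along direction $j$ is \emph{unambiguously} labelled with respect to all other directions; then the adversary can always reduce a \seed{} query to a comparison on a single axis (or a harmless \nil). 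Making this scale-separation coexist with the strong-convex-hull-margin constraint — which caps how far the scales may spread before the diameter, and hence $\gamma$, blows up — is exactly where the hypothesis $\gamma\le m^{-3/2}/16$ is consumed, and balancing those two requirements cleanly is the crux of the argument.
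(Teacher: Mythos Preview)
Your high-level plan matches the paper's: the same product construction (one threshold per ``axis'' giving $N^{\Theta(m)}$ targets with $N\approx\gamma^{-\Theta(1)}$), the same direct-sum embedding of $\lfloor k/2\rfloor$ far-apart binary copies, and a Yao/adversary argument on the version space. The paper's geometry is slightly different---it uses $m-1$ parallel lines $\{e_i+j\,e_m:j\in[\ell]\}$ rather than the $m$ coordinate axes---but the combinatorics and the margin calculation are essentially what you sketch. Where you diverge is in the core counting step, and there the proposal has a real gap.

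You claim that ``in every case a constant fraction of the consistent set survives,'' i.e.\ a deterministic $O(1)$ drop in $\Phi=\sum_j\log|\mathcal T_j|$ per query. This is false for the construction as stated: if the learner sets $U$ to contain one point from each axis (say the current median of each $\mathcal T_j$) and the oracle is forced to answer \nil, then every $\mathcal T_j$ halves and $\Phi$ drops by $\Theta(m)$ in a single step. Your proposed remedy, scale separation, does not touch this: regardless of how the axes are scaled relative to one another, nothing prevents the learner from including one point from each axis's current disagreement interval in $U$, and a \nil\ answer to that $U$ still certifies a constraint on every axis simultaneously. So a deterministic per-query potential drop of $O(1)$ is simply not available for this family, and the scale-separation route is a dead end.

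The paper sidesteps the issue by working in expectation rather than deterministically. It bounds not $\log(|V_{t-1}|/|V_t|)$ but $\E\!\big[|V_{t-1}|/|V_t|\big]$ by a constant, and then applies Jensen to get $\E[\log(|V_{t-1}|/|V_t|)]=O(1)$. The key observation is a cancellation: writing $\alpha_i$ for the fraction of $\mathcal T_i$ below the first queried point on axis $i$, the event $\seed(U,1)=\nil$ has probability exactly $\prod_i\alpha_i$, and conditioned on it the version space shrinks by exactly that same factor, so $\Pr(\nil)\cdot\E\big[|V_{t-1}|/|V_t|\,\big|\,\nil\big]=1$. In the non-\nil\ case the oracle returns the smallest point of some $U_i$ with $C_1\cap U_i\ne\emptyset$, which shrinks only axis $i$ by the factor $1-\alpha_i$; since that answer is available with probability at most $1-\alpha_i$, the contributions again balance. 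No geometric trick is needed---the ``parallelization'' you worry about is neutralized purely by the probability/information trade-off under the uniform prior. With this in place, Markov's inequality on the accumulated log-ratio and your direct-sum step finish the proof exactly as you outline.
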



\subsection{Preliminaries and notation}\label{sub:prelim}
The input to our problem is a pair $(X,k)$, where $X \subset \R^m$ and $k \in \N$ with $2 \le k \le n = |X|$. The algorithm has access to oracles $O_{\lab}$ and $O_{\seed}$ which provide respectively \lab\ and \seed\ queries. The oracles $O_{\lab},O_{\seed}$ behave consistently with some target classifier $h : X \to [k]$. For any $x \in X$, $\lab(x)$ returns $h(x)$. For any $U \subseteq X$ and any $i \in [k]$, $\seed(U,i)$ returns an abitrary element $x \in U \cap C_i$ if $U \cap C_i \ne \emptyset$, and $\nil$ otherwise, where $C_i=h^{-1}(i)$. We often think of $h$ as of the partition $\C=(C_1,\ldots,C_k)$ and we call each $C_i$ a \emph{class} or \emph{cluster}.

A pseudometric is a symmetric and subadditive function $d : \R^m \times \R^m \to \R_{\ge 0}$ such that $d(x,x)=0$ for all $x \in \R^m$; unlike a metric, $d(x,y)$ need not be $0$ for $x \ne y$. In this work $d$ is always induced by a seminorm and thus homogeneous and invariant under translation: $d(u+ax,u+ay)=|a|\,d(x,y)$ for all $x,y,u \in \R^m$ and all $a \in \R$. For a pseudometric $d$ and a set $A \subset \R^m$, we let $\diam_d(A)=\sup\{d(x,y) : x,y \in A\}$ denote the diameter of $A$ under $d$. For $x \in \R^m$ and $r \ge 0$ we denote by $B_d^m(x,r)$ and $S_d^{m-1}(x,r)$ respectively the closed ball and the hypersphere with center $x$ and radius $r$ in $\R^m$ under $d$. When $d$ is omitted we assume $d=d_{\euc}$ where $d_{\euc}$ is the Euclidean metric. We may also omit the superscript if clear from the context.
The distortion of a (pseudometric) $d$ is $\kappa_d=\sup_{u,v \in S^{m-1}(0,1)} \norm{d}{u}/\norm{d}{v}$.

For any set $A \subset \R^m$, any $\mu \in \R^m$, and any $\lambda > 0$, let $\sigma(A,\mu,\lambda)=\mu+\lambda(A-\mu)$ be the scaling of $A$ about $\mu$ by a factor of $\lambda$. For two sets $A,B \subset \R^m$, we write $A \le \lambda B$ if $A \subseteq \sigma(B,z,\lambda)$ for some $z \in \R^m$. We may use $x$ in place of $A$ if $A=\{x\}$.
If $A$ is bounded, then $\MVE(A)$ denotes the minimum-volume enclosing ellipsoid (MVEE, or L\"owner-John ellipsoid) of $A$. Our proofs repeatedly use John's theorem; that is, $\sigma(E,\mu,1/m) \subseteq \conv(A)$ where $\mu$ is the center of $E = \MVE(A)$ and $\conv(A)$ is the convex hull of $A$. Given $A,B \subseteq \R^m$, we say that $A$ and $B$ are linearly separable with margin $r$ if there exist $u \in S^{m-1}(0,1)$ and $b \in \R$ such that $\dotp{u,x}+b \le -r$ for all $x \in A$ and $\dotp{u,x}+b \ge r$ for all $x \in B$.

We consider classifiers satisfying the following property:\footnote{Actually, all our upper bounds hold under a weaker condition: that for every $i$ and every $j \in [k] \setminus \{i\}$ there is a $d_{ij}$ giving the margin.} 
\begin{definition}\label{def:strong_margin}
A class $C_i$ has strong convex hull margin $\gamma > 0$ if there exists a pseudometric $d_i$ induced by a seminorm over $\R^m$ such that
    $d_i(\conv(C_j), \conv(C_i)) > \gamma \, \diam_{d_i}(C_i)$ for all $j \in [k] \setminus \{i\}$.
If this holds for all $i \in [k]$ then we say $\C$ has strong convex hull margin $\gamma$.
\end{definition}
\textbf{Remarks.} The margin of Definition~\ref{def:strong_margin} captures natural scenarios that SVM margin does not.
For instance, suppose we are clustering fruits on the basis of weight and colour. First, a fruit weighting more than, say, $1.5$ times the typical weight of a species probably does not belong to it; but the typical weight varies greatly across species. Our margin captures this scenario, as it is expressed as a fraction of the class' diameter. Second, different fruit species have different separating features; for instance, weight does not separate well oranges from bananas, but colour does. Our margin captures this aspect, too, by allowing the metric that determines the margin to be a function the class.
It is also known that the SVM margin $\gamma_{\text{SVM}}$ can be arbitrarily smaller than $\gamma$; for instance there are simple cases with $\gamma > 1$ but $\gamma_{\text{SVM}} < e^{-n}$ (see~\cite{BCLP21}). Hence a large $\gamma$ does not imply good bounds for standard algorithms based on SVM margin (e.g., the Perceptron).

\section{Related work}\label{sec:related}
It is well known that active learning may achieve exponential savings in label complexity. That is, there are natural concept classes that can be learned with a number of \lab\ queries exponentially smaller than that of passive learning. \cite{hanneke2015minimax} characterize the label complexity of concept classes in terms of their star number. However, the star number of many natural classes such as linear classifiers is unbounded, implying a strong lower bound of $\Omega(n)$ \lab\ queries.

This and other negative results motivated research on enriched queries. \cite{Kane17} prove that active learnability is characterized by the inference dimension of the concept class $\scH$ under the set of allowed queries $\scQ$, as long as those queries are local (i.e., are a function of a constant number of instances). This yields exponential savings when $\scH$ is the class of linear separators and $\scQ$ contains label queries and comparison queries (which, given two points, reveal which one is closer to the decision boundary), provided the classes have SVM margin or bounded bit complexity. \cite{hopkins2020power} give similar results under distributional assumptions. Unfortunately, bounded inference dimension does not automatically yield efficient algorithms, although it implies active learning algorithms with bounded memory~\citep{Hopkins21boundedmem}. 

\seed\ and their variants are motivated and used by~\citet{HannekePHD} as \textsl{positive example queries}, by~\citet{BalcanHanneke12} as \textsl{conditional class queries}, and by~\citet{Beygelzimer16-seedqueries,Attenberg2010} as \textsl{search queries}. They are also used implicitly by~\citet{Tong01-mistake-queries}, \citet{Doyle2011}, and \cite{VikramD16}.
\seed\ queries have been used in cluster recovery \citep{BCLP21-density} and yield exponential savings in non-realizable learning settings \citep{BalcanHanneke12}.
It also easy to see that \seed\ queries are equivalent to \emph{partial equivalence} queries of~\citet{maass1992lower} and to \emph{subset} plus \emph{superset} queries of~\cite{Angluin88}.
To the best of our knowledge, no work combines \lab\ and \seed\ as we do here. 

Little is known about the \seed\ complexity of learning a concept class $\scH$ actively in polynomial time. On the one hand, the inference dimension lower bounds of~\cite{Kane17} are inapplicable, as \seed\ queries are not local. On the other hand the Littlestone dimension of $\scH$ yields an upper bound, but not necessarily an efficient algorithm; in fact, it is well known that (some sub-problem solved by) Halving is hard in general, see~\cite{gonen2013efficient}.
For $k=2$, we can use \seed\ to emulate \emph{equivalence} queries, for which polynomial-time algorithms are known in some special cases.
In particular, the algorithm of~\cite{maass1994fast} could replace our cutting-planes subroutine under an implicit discretization of the space through a grid with step-size $\scO(\nicefrac{\gamma}{m^4})$. However, this gives a polynomial-time algorithm that uses $\scO(m^2 \log \nicefrac{m}{\gamma})$ \seed\ queries, which is $\scO(m)$ times our bound.
Moreover,~\cite{maass1994fast} use \emph{proper} equivalence queries (i.e., the queried concept must be in the class), for which they show a lower bound of $\Omega(m^2\log \nicefrac{m}{\gamma})$. 
Finally, these techniques do not seem to extend to the case $k > 2$.

Our notion of margin strengthens the convex hull margin of~\cite{BCLP21} by requiring $d(\conv(C_j),\conv(C_i)) > \gamma\diam(C_i)$ rather than $d(C_j,C_i) > \gamma\diam(C_i)$. It is not hard to see that the convex hull margin can be arbitrarily smaller than our strong convex hull margin.
Finally, the polytope margin of~\cite{Gottlieb2018} assumes that each class is in the intersection of a finite number of halfspaces with margin. It is easy to see that this condition is strictly stronger than ours.

\section{Upper Bounds}
\label{sec:ub}
This section gives the proofs of Theorem~\ref{thm:upper_binary} and Theorem~\ref{thm:upper_k}. The algorithm behind both theorems has two phases which are described in the next subsections. The case $k > 2$ is essentially the same as for $k=2$, except for an adaptation in the second phase. 

\subsection{The First Phase: Rounding the Classes}
\label{sub:rounding}
The first phase of our algorithms learns what we call an $\alpha$-rounding of $X$.
\begin{definition}
An $\alpha$-rounding of $X$ (w.r.t.\ $h$) is a sequence of pairs $((X_i,E_i))_{i \in [k]}$ where $(X_i)_{i \in [k]}$ is a partition of $X$, and where $E_i$ for $i\in[k]$ is an ellipsoid such that $X_i \subseteq E_i$ and $E_i \le \alpha \conv(C_i)$.
\end{definition}
The idea is that, if $((X_i,E_i))_{i \in [k]}$ is an $\alpha$-rounding of $X$, then $E_i$ gives an approximation of the pseudometric $d_i$ witnessing the strong convex hull margin of $C_i$. Indeed, let $p_i$ be the pseudometric induced by $E_i$, the one such that $E_i=B_{p_i}(\mu_i,1)$ where $\mu_i$ is the center of $E_i$; it is not hard to prove (see Appendix~\ref{apx:ub}):
\begin{lemma}\label{lem:ar}
If $((X_i,E_i))_{i \in [k]}$ is an $\alpha$-rounding of $X$ then $p_i(\conv(X_i \cap C_i), \conv(X_i \cap C_j)) \ge \frac{\gamma}{\alpha}$ for all distinct $i,j \in [k]$.
\end{lemma}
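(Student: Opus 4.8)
The plan is to transfer the strong convex hull margin of $C_i$ from the pseudometric $d_i$ to the pseudometric $p_i$ by comparing the two seminorms along the relevant directions. First I would record that, because $E_i \le \alpha\conv(C_i)$, the $p_i$-unit ball $E_i$ is ``$d_i$-small'': fix $z$ with $E_i \subseteq \sigma(\conv(C_i),z,\alpha)$; then any two points $w_1,w_2 \in E_i$ can be written as $w_t = z + \alpha(a_t - z)$ with $a_t \in \conv(C_i)$, so $w_1 - w_2 = \alpha(a_1 - a_2)$, and by homogeneity of $d_i$ together with the fact that taking convex hulls does not change the $d_i$-diameter, $d_i(w_1,w_2) = \alpha\, d_i(a_1,a_2) \le \alpha\,\diam_{d_i}(\conv(C_i)) = \alpha\,\diam_{d_i}(C_i)$. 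Hence $\diam_{d_i}(E_i) \le \alpha\,\diam_{d_i}(C_i)$.

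Next I would upgrade this diameter bound to a pointwise comparison of $p_i$ and $d_i$. Since $E_i = B_{p_i}(\mu_i,1)$ is centrally symmetric about $\mu_i$, for every $w$ with $p_i$-norm $1$ both $\mu_i + w$ and $\mu_i - w$ lie in $E_i$, so $2\,d_i(w,0) = d_i(\mu_i+w,\mu_i-w) \le \diam_{d_i}(E_i) \le \alpha\,\diam_{d_i}(C_i)$; by homogeneity of both seminorms this gives $d_i(u,v) \le \tfrac{\alpha}{2}\,\diam_{d_i}(C_i)\,p_i(u,v)$ for all $u,v \in \R^m$. (The potential degeneracies—$p_i$ failing to be a norm, or $\diam_{d_i}(C_i)=0$—do not arise: $E_i$ is a genuine ellipsoid, and either degeneracy would force $d_i \equiv 0$ on a full-dimensional set and hence everywhere, contradicting the strong convex hull margin of $C_i$.)

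Finally, take any $u \in \conv(X_i \cap C_i)$ and $v \in \conv(X_i \cap C_j)$, the claimed inequality being vacuous if either set is empty. From $X_i \cap C_i \subseteq C_i$ and $X_i \cap C_j \subseteq C_j$ we get $u \in \conv(C_i)$ and $v \in \conv(C_j)$, so Definition~\ref{def:strong_margin} gives $d_i(u,v) \ge d_i(\conv(C_j),\conv(C_i)) > \gamma\,\diam_{d_i}(C_i)$. Combining this with the displayed comparison and cancelling $\diam_{d_i}(C_i) > 0$ yields $p_i(u,v) > 2\gamma/\alpha \ge \gamma/\alpha$; taking the infimum over $u,v$ proves the lemma. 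The only step that needs genuine care is the second one—converting a bound on the $d_i$-diameter of the $p_i$-ball into a comparison valid in every direction—which is exactly where central symmetry of $E_i$ and the homogeneity of $p_i$ and $d_i$ are used; everything else is bookkeeping.
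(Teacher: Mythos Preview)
Your proof is correct and follows essentially the same route as the paper: both arguments use $E_i \le \alpha\conv(C_i)$ together with homogeneity and translation invariance to deduce a pointwise comparison $d_i \le c\,p_i$, and then feed in the margin of $C_i$ under $d_i$. Your version is more explicit about the key step (using the central symmetry of $E_i$ to pass from a diameter bound to a seminorm comparison) and in fact yields the slightly sharper constant $2\gamma/\alpha$, whereas the paper loses a factor of two by first enclosing $\conv(C_i)$ in a $d_i$-ball of radius $\diam_{d_i}(C_i)$.
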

We will use Lemma~\ref{lem:ar} in the second phase. First, we show how to compute an $\alpha$-rounding of $X$ efficiently. We sample points independently and uniformly at random from $X$ until we find $\Theta(m^2)$ points $S_i$ with the same label $i$. As the VC dimension of ellipsoids in $\R^m$ is $\scO(m^2)$, by standard generalization error bounds with constant probability the MVE of $S_i$ contains at least half of $C_i$. We then store that MVE together with the index $i$, remove $S_i$ from $X$, and repeat until $X$ becomes empty. At that point for each $i \in [k]$ we ``merge'' together all points in the MVEs that were computed for class $i$, and compute the MVE of this merged set. We show that this produces an $\alpha$-rounding of $X$ after $\scO(k \log n)$ rounds in expectation.\footnote{What we actually want is, given a finite set $S \subset \R^m$, an ellipsoid $\EL$ such that $\frac{1}{(1+\epsilon)d}\EL \subset \conv(S) \subset \EL$. This can be computed in $\scO(|S|^{3.5} \ln(|S|/\epsilon))$ operations in the real number model of computation, see~\cite{khachiyan1996rounding}. For simplicity however we just assume that we can compute $\EL=\MVE(S)$ in polytime.} The resulting algorithm \AlgoRound\ is listed below; Figure~\ref{fig:rounding} depicts its behaviour on a toy example.

\begin{algorithm2e}[h!]
\caption{\AlgoRound$(X,k)$}
\DontPrintSemicolon
\SetAlgoVlined 
\lFor{$i \in [k]$}{$h_i \leftarrow 0$}
\While{$X \ne \emptyset$}{
    draw points independently u.a.r.\ from $X$ and \lab\ them until for some $i \in [k]$ we draw a (multi)set of $c m^2$ points from $C_i$\;
    $h_i \leftarrow h_i+1$\;
    $S_i^{h_i} \leftarrow$ the sample of $c m^2$ points from $C_i$\;
    $X_i^{h_i} \leftarrow X \cap \MVE(S_i^{h_i})$ \label{line:AR:E_i}\;
    $X \leftarrow X \setminus X_i^{h_i}$ \label{line:AR:update}\;
}
\For{$i \in [k]$}{
    $X_i \leftarrow X_i^1 \cup \ldots \cup X_i^{h_i}$ (set to $\emptyset$ if $h_i=0$) \label{line:AR:l1}\;
    $E_i \leftarrow \MVE(X_i)$ (set to $\emptyset$ if $X_i=\emptyset$) \label{line:AR:l2}
}
\Return{$((X_i,E_i))_{i \in [k]}$\;}
\end{algorithm2e}
\begin{lemma}\label{lem:algoround}
\AlgoRound$(X,k)$ returns an $m^2(m+1)$-rounding of $X$ in time $\poly(n+m)$ using\linebreak $\scO(k^2 m^2 \log n)$ \lab\ queries in expectation.
\end{lemma}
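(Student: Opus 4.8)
\emph{Proof plan.} The plan is to check three things, in this order: (i) \AlgoRound\ is well defined, halts, and outputs an honest partition of $X$; (ii) each pass of the \textbf{while} loop costs $\scO(km^2)$ \lab\ queries, the running time is $\poly(n+m)$, and the loop runs $\scO(k\log n)$ times in expectation; (iii) for every $i$ with $X_i\neq\emptyset$ the returned ellipsoid satisfies $E_i\le m^2(m+1)\,\conv(C_i)$. Part (i) is bookkeeping. Inside the loop we sample i.i.d.\ uniformly from the current set $X'$ and query \lab\ on each draw; by pigeonhole, after at most $k(cm^2-1)+1=\scO(km^2)$ draws some class has been drawn $cm^2$ times, so the loop terminates with a well-defined winner $i$ after $\scO(km^2)$ \lab\ queries (a deterministic bound, since the stopping condition reads only the \emph{labels}). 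The removed set $X'\cap\MVE(S_i^{h_i})$ contains the fresh sample $S_i^{h_i}\neq\emptyset$, so $|X|$ strictly decreases; hence the loop runs at most $n$ times. The removed sets produced over all passes are pairwise disjoint and their union is $X$, so grouping them by winning class yields a partition $(X_i)_{i\in[k]}$, and $X_i\subseteq\MVE(X_i)=E_i$ by definition of the MVEE (if $X_i=\emptyset$ the rounding conditions hold vacuously, so from now on we fix $i$ with $X_i\neq\emptyset$).

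For (ii), the $\scO(km^2)$ per-pass bound is the pigeonhole count above, and the running time is $\poly(n+m)$ in the worst case because each pass does $\poly(n+m)$ work (sampling and labeling $\scO(km^2)$ points, one MVEE computation on $cm^2$ points, one intersection against the $\le n$ surviving points) and there are at most $n$ passes. To bound the expected number of passes I would first observe that, conditioned on the execution up to a pass won by class $i$, the multiset $S_i^{h_i}$ consists of $cm^2$ i.i.d.\ samples from the uniform distribution $P$ on $C_i\cap X'$ -- the stopping rule inspects only the labels drawn, not their positions, so conditioning on the label sequence leaves the class-$i$ positions i.i.d.\ uniform in $C_i\cap X'$. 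Since ellipsoids in $\R^m$, and hence their complements, have VC dimension $\scO(m^2)$, for a suitable universal constant $c$ a sample of $cm^2$ i.i.d.\ draws satisfies, with probability $\ge 1/2$, $|\hat P(E)-P(E)|\le 1/4$ simultaneously for all ellipsoids $E$; applying this to $E=\MVE(S_i^{h_i})$, which contains every sample and so has $\hat P(E)=1$, shows that $\MVE(S_i^{h_i})$ captures at least a $3/4$-fraction, in particular at least half, of $C_i\cap X'$. On this ``good'' event the removal at least halves the number of points of $C_i$ still present, and that number never increases; since the good event has conditional probability $\ge 1/2$ every time $i$ wins, the number of $i$-wins is stochastically dominated by the number of Bernoulli$(1/2)$ trials needed to collect $\lceil\log_2 n\rceil+1$ successes, which has expectation $\scO(\log n)$. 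Summing over the $k$ classes gives $\scO(k\log n)$ expected passes, hence $\scO(k^2m^2\log n)$ \lab\ queries in expectation.

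Part (iii) is the crux. Fix $i$ with $X_i\neq\emptyset$ and set $K=\conv(C_i)$. For each pass $j$ won by $i$ we have $S_i^{(j)}\subseteq C_i$, so $\conv(S_i^{(j)})\subseteq K$, and John's theorem applied to $\conv(S_i^{(j)})$ says that $E_j:=\MVE(S_i^{(j)})$, with center $c_j$, satisfies $\sigma(E_j,c_j,1/m)\subseteq\conv(S_i^{(j)})\subseteq K$; thus $c_j\in K$ and $E_j\subseteq\sigma(K,c_j,m)$. The temptation is to bound $E_j$ by $\MVE(C_i)$, but this is wrong -- the L\"owner--John ellipsoid is \emph{not} monotone under inclusion -- so instead I would take the union over $j$: since $X_i=\bigcup_j X_i^j\subseteq\bigcup_j E_j\subseteq\bigcup_{c\in K}\sigma(K,c,m)$ and this last set equals the Minkowski sum $mK\oplus(m-1)(-K)$, which is convex, we get $\conv(X_i)\subseteq mK\oplus(m-1)(-K)$. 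Translating $K$ so that its centroid is at the origin and using Minkowski's centroid bound $-K\subseteq mK$, the right-hand side lies in $\bigl(m+m(m-1)\bigr)K=m^2K$; that is, $\conv(X_i)\le m^2\,\conv(C_i)$. One last application of John's theorem, now to $\conv(X_i)$, gives $E_i=\MVE(X_i)\le m\,\conv(X_i)\le m^3\,\conv(C_i)\le m^2(m+1)\,\conv(C_i)$, which establishes (iii); together with (i) and (ii) this proves the lemma.

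I expect the main obstacle to be part (iii): routing around the failure of monotonicity of the L\"owner--John ellipsoid, so that instead of passing through $\MVE(C_i)$ one controls the union of the per-sample ellipsoids $E_j$ by a Minkowski-sum / centroid estimate and only then re-rounds $\conv(X_i)$. A secondary delicate point is part (ii): extracting a \emph{constant} success probability from an $\scO(m^2)$-size sample (the VC bound at constant accuracy costs no extra $\log m$ factor), handling the conditioning so that $S_i^{h_i}$ behaves like an i.i.d.\ uniform sample, and converting the constant per-win success probability into the $\scO(k\log n)$ bound on the total number of passes via stochastic domination.
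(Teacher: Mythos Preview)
Your proof is correct. Parts (i) and (ii) track the paper's argument essentially verbatim (the paper is terser and defers the halving-per-round analysis to a citation, but the content is the same).

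Part (iii) is where you diverge. The paper does not use a Minkowski-sum / centroid argument. Instead it introduces the auxiliary ellipsoid $E=\MVE\big(\bigcup_h S_i^h\big)$ with center $\mu$ and proves a small claim (their Claim~1): if $K$ is convex, $E\supseteq K$ is an ellipsoid with center $\mu_E$, and $f(x)=Ax+\mu$ with $\|A\|_2\le\lambda$ and $\mu\in K$, then $f(K)\subseteq\sigma(E,\mu_E,\lambda+1)$. Applying this with $f=\sigma(\cdot,\mu_i^h,m)$ gives $\sigma(\conv(S_i^h),\mu_i^h,m)\subseteq\sigma(E,\mu,m+1)$, and then John on $E$ yields $\sigma(E,\mu,m+1)\subseteq\sigma(\conv(C_i),\mu,m(m+1))$; one more John on $\conv(X_i)$ gives the factor $m^2(m+1)$. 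Your route replaces that auxiliary-ellipsoid step by the identity $\bigcup_{c\in K}\sigma(K,c,m)=mK\oplus(m-1)(-K)$ together with the Minkowski centroid inequality $-K\subseteq mK$; this is more elementary (no extra MVE, no auxiliary claim) and in fact yields the slightly sharper constant $m^3$ in place of $m^2(m+1)$. Both arguments share the same skeleton---John on each $\MVE(S_i^h)$, control of the union of the blown-up samples, then John on $\conv(X_i)$---and differ only in the middle step.
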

\begin{proof}(\emph{Sketch})
First we show that $E_i \le m^2(m+1) \conv(C_i)$ for all $i \in [k]$. This is trivial if $E_i=\emptyset$, so let $E_i \ne \emptyset$ and let $\ell_i \ge 1$ be the value of $h_i$ at return time. For every $h=1,\ldots,\ell_i$ let $E_i^h=\MVE(S_i^h)$ and let $\mu_i^h$ be the center of $E_i^h$. Using John's theorem one can show that
$\sigma\!\left(E_i,\mu_i,\frac{1}{m}\right) \subseteq \conv \bigcup_{h=1}^{\ell_i} \sigma\big(\conv(S_i^h), \mu_i^h, m\big)$ and $\sigma\big(\conv(S_i^h), \mu_i^h, m\big) \subseteq \sigma(\conv(C_i), \mu, m(m+1))$.
By taking the union over all $h \in [\ell_i]$ we conclude that $\sigma\!\left(E_i,\mu_i,\frac{1}{m}\right) \subseteq \sigma(\conv(C_i), \mu, m(m+1))$, that is, $E_i \le m^2(m\!+\!1) \conv(C_i)$. It is also easy so see that $(X_i)_{i \in [k]}$ is a partition of $X$, hence $((X_i,E_i))_{i \in [k]}$ is an $m^2(m\!+\!1)$-rounding of $X$. 

For the running time, the \textbf{for} loops perform $k \le n$ iterations, and the \textbf{while} loop performs at most $n$ iterations as each iteration strictly decreases the size of $X$. The running time of any iteration is dominated by the computation of $\MVE(S_i)$ or $\MVE(X_i)$, which takes time $\poly(n+m)$, see above. Hence \AlgoRound$(X,k)$ runs in time $\poly(n+m)$. For the query bounds, the \textbf{while} loop makes $\scO(m^2 k)$ \lab\ queries per iteration. By standard generalization bounds, since the VC dimension of ellipsoids in $\R^m$ is $\scO(m^2)$, $E_i^{h}$ contains at least half of $X \cap C_i$ with probability at least $\frac{1}{2}$, and thus the expected number of rounds before $X$ becomes empty is in $\scO(k \lg n)$, see~\cite{BCLP21}. We conclude that \AlgoRound$(X,k)$ uses $\scO(m^2 k^2 \lg n)$ \lab\ queries in expectation.
\end{proof}

\begin{figure}
    \centering
\begin{tikzpicture}[scale=1.4,rotate=35]
\pgfdeclarelayer{background}
\pgfdeclarelayer{foreground}
\pgfsetlayers{background,main,foreground}

\begin{scope}[xshift=5pt]
\coordinate (x0) at (-.9,1);
\end{scope}

\begin{scope}[rotate=10]
\begin{pgfonlayer}{background}
\ellipsebyfoci{fill=blue,fill opacity=.08}{$(-2.37,3.84)$}{$(-.9,.92)$}{1.2};
\end{pgfonlayer}
\ellipsebyfoci{thick,draw=blue}{$(-2.37,3.84)$}{$(-.9,.92)$}{1.2};
\ellipsebyfoci{thick,draw=blue,densely dotted}{$(-.8,1)$}{$(-1.3,1.7)$}{2};
\begin{scope}[shift={(-1,.6)}]
\ellipsebyfoci{thick,draw=blue,densely dotted}{$(-1.1,2.3)$}{$(-.2,2.6)$}{1.8};
\ellipsebyfoci{thick,draw=blue,densely dotted}{$(-1.5,3)$}{$(-1.2,3.2)$}{2.8};
\begin{pgfonlayer}{foreground}
\node[bluepoint] (x0) at (x0) {};
\node[bluepoint] (x1) at (-.15,0) {};
\node[bluepoint] (x2) at (.55,1.1) {};
\node[bluepoint] (x3) at (-.3,1.5) {};
\node[bluepoint] (x4) at (0,1) {};
\node[bluepoint] (x5) at (-.78,.6) {};
\node[bluepoint] (y1) at (.1,2.3) {};
\node[bluepoint] (y2) at (-1.42,2.2) {};
\node[bluepoint] (y3) at (-.3,3.1) {};
\node[bluepoint] (y4) at (-.2,2.2) {};
\node[bluepoint] (z1) at (-1,2.85) {};
\node[bluepoint] (z2) at (-1.5,2.68) {};
\node[bluepoint] (z3) at (-1.5,3.5) {};
\end{pgfonlayer}
\end{scope}
\end{scope}

\begin{scope}[xshift=5pt]
\begin{pgfonlayer}{background}
\ellipsebyfoci{fill=white}{$(-.62,.6)$}{$(1,.65)$}{1.53};
\ellipsebyfoci{thick,draw=black,fill=white,fill opacity=1}{$(-.62,.6)$}{$(1,.65)$}{1.53};
\end{pgfonlayer}
\ellipsebyfoci{fill=white,opacity=.5}{$(-.62,.6)$}{$(1,.65)$}{1.53};
\ellipsebyfoci{thick,draw=black,densely dotted}{$(.1,.1)$}{$(-.8,1)$}{1.3};
\ellipsebyfoci{thick,draw=black,densely dotted}{$(0,0)$}{$(1,1)$}{1.4};
\begin{pgfonlayer}{foreground}
\coordinate (x0) at (-.9,1);
\node[point] (a1) at (-.77,.3) {};
\node[point] (a2) at (0,-.1) {};
\node[point] (a3) at (-.2,1.05) {};
\node[point] (a4) at (-.5,.5) {};
\node[point] (b1) at (.93,0) {};
\node[point] (b2) at (1.05,1.25) {};
\node[point] (b3) at (.7,.9) {};
\node[point] (b4) at (.3,-.3) {};
\node[point] (b5) at (.3,1.17) {};
\end{pgfonlayer}
\end{scope}

\begin{scope}[rotate=-35]
\node (E1) at (-5.05,1) {$E_1$};
\node (E2) at (1.4,1) {$E_2$};
\end{scope}

\end{tikzpicture}
\caption{A toy example in $\R^2$ with $k=2$; black points are in $C_1$, blue points in $C_2$. \AlgoRound$(X,2)$ computes first the ellipsoids $E_2^1,E_2^2$ (dotted black, from left to right), and then the ellipsoids $E_1^1,E_1^2,E_1^3$ (dotted blue, from left to right). Finally it computes $E_1$ (solid blue) and $E_2$ (solid black). $X_1$ and $X_2$ consist of the points in the blue and white areas respectively. Note that $X_2$ contains a point of $C_1$.}
\label{fig:rounding}
\end{figure}
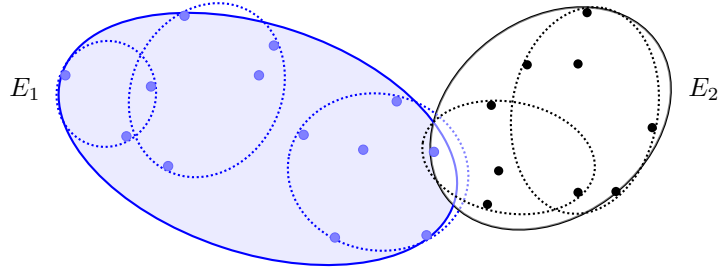

\subsection{The Second Phase: Finding a Separator via Cutting Planes}
\label{sub:cp}
Let $((X_i,E_i))_{i\in[k]}$ be the output of \AlgoRound$(X,k)$, and fix $i \in [k]$. For each $j \in [k] \setminus \{i\}$, we want to separate $X_i \cap C_i$ from $X_i \cap C_j$. To this end, first we use $E_i$ to perform a change of coordinates; this puts $X_i$ inside the unit ball and ensures that $X_i \cap C_i$ and $X_i \cap C_j$ are linearly separated with margin $\gamma_{\text{SVM}}=\Omega(\gamma m^{-3})$. Next, by calling $C_i$ the positive class ($+1$) and $C_j$ the negative class ($-1$), and letting $X=X_i$ for simplicity, one can reduce the task to the following problem. Consider a \emph{partial} classifier $h:X \to \{+1,-1, *\}$. The algorithm has access to an oracle answering queries $\seed(U,y)$ where $U \subseteq X$ and $y \in \{+1,-1\}$, and its goal is to compute a separator of $X$:
\begin{definition}
Let $X \subset \R^m$ and $h:X \to \{+1,-1, *\}$. A \emph{separator} of $X$ (w.r.t.\ $h$) is a partition $(X_+,X_-)$ of $X$ such that, for every $x \in X$, if $h(x)=+1$ then $x \in X_+$ and if $h(x) = -1$ then $x \in X_-$.
\end{definition}
\noindent 
A separator of $X$ can be learned, for instance, by the Perceptron (using \seed\ to find counterexamples). However, this would yield a query and running time bound of $\scO(1/\gamma_{\text{SVM}}^2)=\scO(m^6/\gamma^2)$.
We provide \CuttingPlanes, a cutting-plane algorithm based on \seed\ that is much more query-efficient (in fact, near-optimal):
\begin{theorem}\label{thm:cp}
Let $X \subset \R^m$ and $h:X \to \{+1,-1, *\}$, and suppose $h^{-1}(+1)$ and $h^{-1}(-1)$ are linearly separable with margin $r$. Given $X$ and access to \seed\ for labels $\{+1,-1\}$, \CuttingPlanes$(X)$ computes a separator of $X$ w.r.t.\ $h$ using $\scO(m \log \frac{R}{r})$ \seed\ queries in expectation, where $R=\max_{x\in X} \norm{2}{x}$, and running with high probability\footnote{This means that the running time can be brought in $\poly(m+|X|)$ with probability $1-\exp(-(m+|X|))$.} in time $\poly(m+|X|)$.
\end{theorem}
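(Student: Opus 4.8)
The plan is to implement \CuttingPlanes\ as a randomized approximate center-of-gravity cutting-plane method in the $(m{+}1)$-dimensional space of linear classifiers $\theta=(w,b)$, using \seed\ queries as an approximate separation oracle. Rescaling $X$ by $1/R$ lets us assume $R=1$ (this replaces the margin $r$ by $r/R$, which is where the $\log\tfrac{R}{r}$ comes from), and if one of $h^{-1}(+1),h^{-1}(-1)$ is empty the statement is trivial, so assume both are nonempty. Fix any witness $\theta^\star=(u^\star,b^\star)$ with $\norm{2}{u^\star}=1$ and margin $r$; since $\norm{2}{x}\le1$ for all $x\in X$ and both classes are nonempty we get $|b^\star|\le1$, hence $\theta^\star\in K_0:=B^{m+1}(0,3)$, and for $\rho:=r/4$ every $\theta$ with $\norm{2}{\theta-\theta^\star}\le\rho$ is still a separator of margin $\ge r/2$, so $B^{m+1}(\theta^\star,\rho)\subseteq K_0$ is a ``fat'' set of good separators. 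Note the algorithm never uses $r$: it stops once it has found a correct separator, and $r$ only enters the analysis.

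The algorithm maintains a convex body $K_t=K_0\cap(\text{halfspaces})$ with the invariant $B^{m+1}(\theta^\star,\rho)\subseteq K_t$, starting from $K_0$. At step $t$: (i) using a polynomial-time random-walk sampler with the (polynomial-time) membership oracle of $K_t$, draw $\poly(m+|X|)$ nearly uniform samples from $K_t$ and let $\theta_t=(w_t,b_t)$ be their mean, an approximate centroid; (ii) set $A_t=\{x\in X:\dotp{w_t,x}+b_t\le0\}$, $B_t=X\setminus A_t$, and query $\seed(A_t,+1)$ and $\seed(B_t,-1)$; (iii) if both return \nil, output the separator $(B_t,A_t)$ and halt; (iv) otherwise pick a returned point $x$ --- if it came from $\seed(A_t,+1)$ then $x\in h^{-1}(+1)$ with $\dotp{w_t,x}+b_t\le0$, and we set $K_{t+1}=K_t\cap\{\theta:\dotp{w,x}+b\ge0\}$, and symmetrically ($\le0$) for a point returned by $\seed(B_t,-1)$.

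Correctness: if both queries return \nil\ then no positive point lies in $A_t$ and no negative point in $B_t$, so $(B_t,A_t)$ is a separator of $X$ w.r.t.\ $h$. Invariant: each added cut is $\dotp{w,x}+b\ge0$ for a positive $x$ (or $\le0$ for a negative $x$), and on $B^{m+1}(\theta^\star,\rho)$ that expression is $\ge r/2>0$ (resp.\ $\le-r/2<0$), so the fat ball survives every cut; in particular $K_t$ is always nonempty and is sandwiched between balls of radii $\rho$ and $3$, which lets the sampler run in time $\poly(m+|X|)$ and return sufficiently accurate samples except with probability $\exp(-(m+|X|))$. Progress: the cut at step $t$ discards the side containing $\theta_t$, which misclassified $x$; since $\theta_t$ lies within a small, sample-controlled distance of the true centroid of $K_t$, an approximate form of Grünbaum's inequality gives $\E[\vol(K_{t+1})\mid K_t]\le\beta\,\vol(K_t)$ for a universal constant $\beta<1$. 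Termination and query count: by the invariant $\vol(K_t)\ge\rho^{m+1}\vol(B^{m+1}(0,1))$ as long as the algorithm is running, whereas iterating the progress bound gives $\E[\vol(K_t)\,\Ind\{\text{still running}\}]\le\beta^{t}3^{m+1}\vol(B^{m+1}(0,1))$, so $\Pr(\text{still running after }t\text{ steps})\le\beta^{t}(3/\rho)^{m+1}$, which is $\le\tfrac{1}{2}$ once $t=\Theta(m\log\tfrac{1}{\rho})=\Theta(m\log\tfrac{R}{r})$; a geometric-tail (restart) argument then bounds the expected number of iterations, hence the expected number of \seed\ queries (twice that, plus two), by $\scO(m\log\tfrac{R}{r})$, while each iteration runs in time $\poly(m+|X|)$ except with probability $\exp(-(m+|X|))$ and there are polynomially many of them with high probability, giving total running time $\poly(m+|X|)$ with high probability. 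The membership oracle of $K_t$ only tests $\norm{2}{\cdot}\le3$ and the $\scO(m\log\tfrac{R}{r})$ stored linear cuts.

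The main obstacle is the \emph{progress} step: making the idealized center-of-gravity volume halving rigorous for the \emph{empirical} centroid of random-walk samples. Concretely, one must quantify how many samples force $\theta_t$ close enough to the true centroid that an approximate Grünbaum bound still yields a constant expected volume drop --- even when $\theta_t$ ends up slightly on the kept side of the cut --- and arrange the per-step failure probability to be small enough that these failures cost only an $\scO(1)$ factor in the ``in expectation'' query bound and leave the running time polynomial ``with high probability''. Here the invariant that $K_t$ stays sandwiched between balls of radii $\rho$ and $3$ is exactly what keeps the sampler efficient and all the estimates uniform over $t$; the remaining work is routine constant bookkeeping.
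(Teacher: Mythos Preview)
Your overall strategy --- approximate center-of-gravity cutting planes, with \seed\ playing the role of a separation oracle, and an approximate-Gr\"unbaum volume drop --- is the same as the paper's, and your correctness and query-count arguments are essentially right. The gap is in the running-time claim, specifically the sentence ``the invariant that $K_t$ stays sandwiched between balls of radii $\rho$ and $3$ is exactly what keeps the sampler efficient.'' It does not. The mixing time of hit-and-run (or any known convex-body sampler) from a membership oracle scales polynomially with the \emph{roundness ratio} of the body; your invariant only gives ratio $3/\rho = \Theta(R/r)$, and the theorem places no bound on $R/r$ in terms of $m$ and $|X|$. So each iteration costs $\poly(m,|X|,R/r)$, not $\poly(m+|X|)$. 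Worse, you have no warm start: after a raw cut $K_{t+1}=K_t\cap H$ the surviving volume can be as small as $\vol(B(\theta^\star,\rho))$, an $(\rho/3)^{m+1}$ fraction of $\vol(K_t)$, so essentially none of your samples from $K_t$ land in $K_{t+1}$ and you cannot cheaply re-round.

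This is precisely the obstacle the paper singles out and solves with an extra device you are missing: after obtaining the counterexample $u_i$ and its raw cut $Z_i$, the paper \emph{relaxes} $Z_i$ to a halfspace $Z_i^*$ that (i) still contains $V_i\cap Z_i$ (so the target region is preserved and the volume still shrinks by a constant factor), but (ii) has the approximate centroid $\hat\mu_i$ on its boundary, hence $\vol(V_{i+1})\ge c_1\vol(V_i)$ for a universal $c_1>0$. Point (ii) is what lets you reuse samples from $V_i$ to put $V_{i+1}$ back into an $\Omega(1/m)$-rounded position, so every iteration truly costs $\poly(m+|X|)$. Your ``fat ball survives'' invariant is good for showing the algorithm never runs out of feasible hypotheses, but it is not a substitute for maintaining bounded roundness across iterations. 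A smaller point: you bound the number of iterations by $\scO(m\log(R/r))$ with high probability and call this ``polynomially many,'' but $\log(R/r)$ need not be $\poly(m+|X|)$; the paper instead observes that each non-terminating round reveals a fresh label, giving a deterministic cap of $|X|$ iterations.
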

\begin{proof}(\emph{Sketch})
First, we lift $X$ to $\R^{m+1}$. This reduces the problem to finding a homogeneous linear separator. To this end we let $X'=\{x' : x \in X\}$ where $x'$ is obtained by appending to $x$ an $(m+1)$-th coordinate that is equal to $R$, and we extend $h$ to $X'$ in the obvious way. It is easy to prove that $X'$ has radius at most $2R$ and that in $X'$ the two classes are linearly separable with margin $\frac{r}{2}$.

Next, we learn a separator of $X'$ w.r.t.\ $h$ via cutting planes---see, e.g., \cite{mitchell2003polynomial}. Let $V_0=B^{m+1}(0,1)$. Every point $u \in V_0$ identifies the halfspace $H(u) = \{ z \in \R^{m+1} : \dotp{u,z} \ge 0 \}$. For $i=1,2,\ldots$, $V_i$ will be our version space, and we compute $V_{i+1}$ from $V_i$ as follows. Let $\mu_i$ be the center of mass of $V_i$, and let $X_i' = X' \cap H(\mu_i)$. By issuing \seed$(X_i',-1)$ and \seed$(X' \setminus X_i', +1)$ we learn whether $(X_i',X' \setminus X_i')$ is a separator of $X'$ w.r.t.\ $h$, in which case we return the corresponding partition of $X$, or we obtain a point $u_i$. In the second case, we let $V_{i+1}=V_i \cap U_i$ where $U_i = \{x \in \R^{m+1} : h(u_i) \cdot \dotp{u_i,x} \ge 0\}$. By~\citep[Theorem 2]{Navot2004-BPM} this procedure returns a separator of $X'$ w.r.t.\ $h$ using at most $\frac{2m}{\log\frac{e}{e-1}} \log \frac{4R}{r/2} = \scO\big(m \log \frac{R}{r}\big)$ queries.

Unfortunately, computing $\mu_i$ is hard in general~\citep{Rademacher07Centroid}. We instead compute an estimate $\hat \mu_i$ that, used in place of $\mu_i$, ensures $\frac{\vol(V_{i+1})}{\vol(V_i)}$ is bounded away from $1$ with high probability; the expected query bound follows by adapting the proof of~\citep{Navot2004-BPM}. Assume for the moment that $V_i$ is well-rounded---that is, it contains a ball of radius $r=\poly(m)$ and is contained in a ball of radius $1$. To compute $\hat \mu_i$ we average over $\poly(n+m)$ independent uniform points from $V_i$, which can be draw efficiently thanks to the rounding condition. At this point we use $\hat \mu_i$ in place of $\mu_i$ to invoke \seed\ and obtain a violated constraint $U_i$. Howewer, setting $V_{i+1}=V_i \cap U_i$ could make $V_{i+1}$ far from rounded (too ``thin''), making sampling inefficient at the next round. Therefore we rotate $U_i$ so to obtain a weaker constraint $U_i^*$, one that still contains $V_i \cap U_i$ but that has $\hat \mu_i$ on its boundary, and let $V_{i+1} = V_i \cap U_i^*$. By the assumption on $\hat \mu_i$ this implies that $\vol(V_{i+1}) \ge \frac{1}{3}\vol(V_i)$; therefore by sampling uniform points from $V_i$ we can obtain a large sample in $V_{i+1}$, from which we can put $V_{i+1}$ in a rounding position. See the full proof for all the details.
\end{proof}
To the best of our knowledge, \CuttingPlanes\ is the first efficient algorithm that achieves the query upper bound of Theorem~\ref{thm:cp}, even for the special case of SVM margin.
\subsection{Wrap-Up}
\label{sub:wrapup}
We wrap up our algorithms, starting with the case $k=2$; the case $k \ge 2$ is slightly more involved.
\begin{algorithm2e}[h!]
\caption{\AlgoSC$(X)$}
\DontPrintSemicolon
\SetAlgoVlined 
$((X_1,E_1),(X_2,E_2)) \leftarrow$ \AlgoRound$(X)$\;
\For{$i \leftarrow 1,2$}{
    change system of coordinates so that $E_i$ becomes the unit ball\;
    $(X_{i+}, X_{i-}) \leftarrow$ \CuttingPlanes$(X_i)$ with $h : X_i \to \{1,2\}$ \label{line:cp}\;
}
\Return $(X_{1+}\cup X_{2-}, X_{2+} \cup X_{1-})$
\end{algorithm2e}
\begin{theorem}\label{thm:algo_bin}
Suppose $k=2$. Then \AlgoSC$(X)$ returns $\C=(C_1,C_2)$ in time $\poly(n+m)$ using in expectation $\scO(m^2 \log n)$ \lab\ queries and $\scO(m \log \frac{m}{\gamma})$ \seed\ queries.
\end{theorem}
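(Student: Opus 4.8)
The plan is to string together the two guarantees already established --- Lemma~\ref{lem:algoround} for the first phase and Theorem~\ref{thm:cp} for the second --- the only genuine work being to check that the coordinate change in line~\ref{line:cp} of \AlgoSC\ hands \CuttingPlanes\ an instance with the required radius and margin. First I would invoke Lemma~\ref{lem:algoround} to get that $((X_1,E_1),(X_2,E_2))=\AlgoRound(X)$ is an $\alpha$-rounding of $X$ with $\alpha=m^2(m+1)$; in particular $(X_1,X_2)$ partitions $X$ and $E_i\le\alpha\,\conv(C_i)$ for $i=1,2$. Fixing $i$, let $p_i$ be the seminorm-induced pseudometric with $E_i=B_{p_i}(\mu_i,1)$, and let $T_i$ be the affine map sending $E_i$ to $B^m(0,1)$ that the algorithm uses. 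The observation I would make is that $T_i$ is an isometry from $(\R^m,p_i)$ onto $(\R^m,d_{\euc})$; consequently $T_iX_i\subseteq B^m(0,1)$, so $R:=\max_{x\in T_iX_i}\norm{2}{x}\le 1$, and by Lemma~\ref{lem:ar} the Euclidean distance between $\conv\big(T_i(X_i\cap C_1)\big)$ and $\conv\big(T_i(X_i\cap C_2)\big)$ equals $p_i\big(\conv(X_i\cap C_1),\conv(X_i\cap C_2)\big)\ge\gamma/\alpha$, so those two sets are linearly separable with Euclidean margin $r:=\gamma/(2\alpha)=\Omega(\gamma m^{-3})$.

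With the margin and radius in hand I would apply Theorem~\ref{thm:cp} to each call $\CuttingPlanes(X_i)$. Since every point of $X_i$ carries a definite label, the returned separator $(X_{i+},X_{i-})$ satisfies $X_{i+}=X_i\cap C_i$ and $X_{i-}=X_i\cap C_{3-i}$ (adopting the convention of Section~\ref{sub:cp} that on iteration $i$ class $i$ plays the positive label). Because $(X_1,X_2)$ partitions $X$,
\[
\big(X_{1+}\cup X_{2-},\; X_{2+}\cup X_{1-}\big)=\big((X_1\cap C_1)\cup(X_2\cap C_1),\;(X_2\cap C_2)\cup(X_1\cap C_2)\big)=(C_1,C_2)=\C,
\]
giving correctness. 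For the budgets: the only \lab\ queries are those of \AlgoRound, which are $\scO(m^2\log n)$ in expectation by Lemma~\ref{lem:algoround} with $k=2$; the only \seed\ queries are those of the two \CuttingPlanes\ calls, each using $\scO\big(m\log\frac{R}{r}\big)=\scO\big(m\log\frac{m}{\gamma}\big)$ in expectation by Theorem~\ref{thm:cp} (with $R\le 1$ and $r=\Omega(\gamma m^{-3})$). I would stress that this per-call bound holds \emph{conditionally} on the output of \AlgoRound, because Lemma~\ref{lem:ar} supplies the margin for \emph{every} $\alpha$-rounding, so the tower rule makes the two contributions add to $\scO\big(m\log\frac{m}{\gamma}\big)$ in expectation. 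The running time is $\poly(n+m)$ (with high probability) by composing the $\poly(n+m)$ time of \AlgoRound, the $\poly(m)$ cost of each coordinate change, and the $\poly(m+|X_i|)$ time of each \CuttingPlanes\ call.

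I do not expect a real obstacle: with Lemmas~\ref{lem:algoround}--\ref{lem:ar} and Theorem~\ref{thm:cp} available, this theorem is essentially an accounting exercise, and the substance has been pushed into those results (above all the efficient centroid sampler inside \CuttingPlanes). The one place to be careful within this wrap-up is the margin/radius translation under $T_i$ --- confirming the isometry genuinely converts the $p_i$-distance bound of Lemma~\ref{lem:ar} into a Euclidean margin and that $R\le 1$ --- plus two minor degeneracies, namely an empty part $X_i\cap C_{3-i}$ (the separation is vacuous and the relevant \seed\ query returns \nil\ at once) and a rank-deficient $E_i$ (run the argument inside the affine hull of $E_i$, or skip $i$ altogether if $X_i=\emptyset$). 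Finally I would double-check that the per-call \seed\ bound is uniform over the random partition, so that linearity of expectation legitimately combines the two phases.
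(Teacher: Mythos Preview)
Your proposal is correct and follows essentially the same route as the paper: invoke Lemma~\ref{lem:algoround} for the rounding, use Lemma~\ref{lem:ar} to read off the $p_i$-margin, observe that the coordinate change turns this into a Euclidean margin $\Omega(\gamma m^{-3})$ with radius $R\le 1$, and apply Theorem~\ref{thm:cp}. The paper's own proof is a terse two-sentence version of exactly this; you simply spell out more of the bookkeeping (the isometry under $T_i$, the correctness of the returned partition, the conditional/uniform nature of the \seed\ bound, and the degenerate cases).
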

\begin{proof}
By Lemma~\ref{lem:algoround}, \AlgoRound$(X)$ runs in time $\poly(n+m)$, makes $\scO(m^2 \log n)$ \lab\ queries in expectation, and returns an $\scO(m^3)$-rounding of $X$.
It is immediate to see that, after the change of coordinates, $X_i$ has radius $R \le 1$, while $C_1 \cap X_1$ and $C_2 \cap X_1$ are separated linearly with margin $r=\Omega(\gamma m^{-3})$. By Theorem~\ref{thm:cp} then, \CuttingPlanes$(X_i)$ returns the partition of $X_i$ induced by $h$ in time $\poly(|X_i|+m) = \poly(n+m)$ using $\scO\big(m \log \frac{R}{r}\big) = \scO\big(m \log \frac{m}{\gamma}\big)$ expected \seed\ queries.
\end{proof}

For $k \ge 2$ we proceed as follows. Let $\mathbf{k}=[k]$. We take $X_i$ for each $i \in \mathbf{k}$ in turn, and for each $j \in \mathbf{k} \setminus i$, we use \CuttingPlanes\ to compute a separator for $i,j$ in $X_i$. By intersecting the left side of all those separators we obtain $X_i \cap C_i$. Then we recurse on $X_i \setminus C_i$, updating $\mathbf{k}$ to $\mathbf{k} \setminus i$. The resulting algorithm \AlgoSCK\ is listed below and yields:
\begin{theorem}\label{thm:algo_k}
\AlgoSCK$(X,[k])$ returns $\C$ in time $\poly(n+m)$ using in expectation\linebreak $\scO(k! k^2 \, m^2 \log n)$ \lab\ queries and $\scO\big(k! k^2 \, m \log \frac{m}{\gamma}\big)$ \seed\ queries.
\end{theorem}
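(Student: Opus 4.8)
The plan is to analyze \AlgoSCK\ recursively, extending the binary analysis of Theorem~\ref{thm:algo_bin}. Recall that \AlgoSCK$(Y,\mathbf{k})$ runs \AlgoRound$(Y,\mathbf{k})$ to obtain an $\alpha$-rounding $((Y_i,E_i))_{i\in\mathbf{k}}$ with $\alpha=m^2(m+1)$; then, for each $i\in\mathbf{k}$ and each $j\in\mathbf{k}\setminus i$, it changes coordinates so that $E_i$ is the unit ball and calls \CuttingPlanes$(Y_i)$ with the partial classifier $h_{ij}$ equal to $+1$ on $C_i$, $-1$ on $C_j$, and $*$ elsewhere, obtaining a separator $(Y_{ij}^{+},Y_{ij}^{-})$ of $Y_i$; it sets $\widehat C_i=\bigcap_{j\in\mathbf{k}\setminus i}Y_{ij}^{+}$ and recurses on $(Y_i\setminus\widehat C_i,\ \mathbf{k}\setminus i)$, the algorithm outputting as class $i$ the union of all $\widehat C_i$ produced throughout the recursion.

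The first thing I would record is that the strong convex hull margin is inherited by every sub-instance: for $Y\subseteq X$ and distinct $i,j$, since $\conv(Y\cap C_j)\subseteq\conv(C_j)$, $\conv(Y\cap C_i)\subseteq\conv(C_i)$ and $\diam_{d_i}(Y\cap C_i)\le\diam_{d_i}(C_i)$,
\[
d_i\big(\conv(Y\cap C_j),\conv(Y\cap C_i)\big)\ \ge\ d_i\big(\conv(C_j),\conv(C_i)\big)\ >\ \gamma\,\diam_{d_i}(C_i)\ \ge\ \gamma\,\diam_{d_i}(Y\cap C_i).
\]
Since \AlgoSCK\ only ever recurses on subsets of $X$ with restricted label sets, the hypotheses of Lemma~\ref{lem:algoround}, Lemma~\ref{lem:ar} and Theorem~\ref{thm:cp} hold with the \emph{same} parameters at every recursion depth: after the change of coordinates, $Y_i$ has radius $R\le 1$ and $Y_i\cap C_i,\ Y_i\cap C_j$ are linearly separable with margin $r=\Omega(\gamma m^{-3})$, so each \CuttingPlanes\ call costs $\scO\big(m\log\tfrac{R}{r}\big)=\scO\big(m\log\tfrac{m}{\gamma}\big)$ \seed\ queries and runs in $\poly(n+m)$ time with high probability. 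With this in hand I would prove, by induction on $|\mathbf{k}|$ and with the invariant that every point of the current set $Y$ has its true label in $\mathbf{k}$, that $\widehat C_i=Y_i\cap C_i$ at every node. The inclusion $\supseteq$ is immediate since each $(Y_{ij}^{+},Y_{ij}^{-})$ is a separator w.r.t.\ $h_{ij}$ and so contains every $x\in Y_i$ with $h(x)=i$ in $Y_{ij}^{+}$. For $\subseteq$, take $x\in\widehat C_i$ with label $\ell$; by the invariant $\ell\in\mathbf{k}$, and if $\ell\ne i$ then $h_{i\ell}(x)=-1$, forcing $x\in Y_{i\ell}^{-}=Y_i\setminus Y_{i\ell}^{+}$ and contradicting $x\in\widehat C_i$; hence $\ell=i$. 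The invariant passes to the child $(Y_i\setminus\widehat C_i,\mathbf{k}\setminus i)$ because $Y_i\subseteq Y$ and we removed exactly $Y_i\cap C_i$; since $(Y_i)_{i\in\mathbf{k}}$ partitions $Y$, the $\widehat C_i$'s together with the classes returned by the recursive calls reconstruct $(Y\cap C_\ell)_{\ell\in\mathbf{k}}$, and the base case $|\mathbf{k}|=1$ is trivial.

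For the complexity, the recursion tree is indexed by the sequences $(i_1,\dots,i_t)$ of distinct elements of $[k]$ with $0\le t\le k-1$, of which there are $\sum_{t=0}^{k-1}k!/(k-t)!=k!\sum_{s=1}^{k}1/s!=\scO(k!)$. A node with label set of size $s\le k$ runs \AlgoRound\ once — costing $\scO(s^2 m^2\log n)=\scO(k^2 m^2\log n)$ \lab\ queries in expectation and no \seed\ queries (Lemma~\ref{lem:algoround}) — and makes at most $s(s-1)\le k^2$ \CuttingPlanes\ calls — costing $\scO(k^2 m\log\tfrac{m}{\gamma})$ \seed\ queries in expectation and no \lab\ queries (Theorem~\ref{thm:cp}). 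By linearity of expectation over the $\scO(k!)$ potential nodes, each contributing only when it is reached, the totals are $\scO(k!\,k^2 m^2\log n)$ \lab\ and $\scO(k!\,k^2 m\log\tfrac{m}{\gamma})$ \seed\ queries in expectation. For the running time, the sub-instances at any fixed recursion depth live on pairwise disjoint subsets of $X$, so at most $n$ of them are non-empty and hence at most $kn$ nodes ever do any work; each runs in $\poly(n+m)$ time with high probability, and a union bound over these nodes gives total running time $\poly(n+m)$ with high probability (in the sense of Theorem~\ref{thm:cp}). The expectation-to-high-probability conversion for the query counts is the one stated in the introduction.

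I expect the main obstacle to be the inclusion $\widehat C_i\subseteq Y_i\cap C_i$ and, behind it, keeping the parameters $R$ and $r$ depth-independent: $Y_i$ may contain points of classes other than $i$ and $j$, on which $h_{ij}$ is undefined and \CuttingPlanes\ gives no guarantee, so a point's label can only be pinned down by combining the separators over \emph{all} $j\in\mathbf{k}\setminus i$ and by the invariant that only labels in $\mathbf{k}$ survive at each level — which is exactly what makes the ``margin inherited by sub-instances'' observation the crux of the argument rather than a cosmetic remark.
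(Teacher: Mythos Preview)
Your proposal is correct and follows essentially the same approach as the paper: bound the number of recursive calls by $\scO(k!)$ (and by $\poly(n)$ for the running time), then multiply by the per-call cost from Lemma~\ref{lem:algoround} and Theorem~\ref{thm:cp}. You actually supply details the paper omits---the margin-inheritance observation, the inductive correctness argument that $\widehat C_i=Y_i\cap C_i$ via the invariant that only labels in $\mathbf{k}$ survive, and the explicit node count $k!\sum_{s=1}^{k}1/s!$---while the paper simply asserts ``at most $\min(k!,n)$ recursive calls'' and defers to the binary proof; your $kn$ bound for non-empty nodes (versus the paper's $n$, obtained by noting each call consumes at least one point) is slightly looser but equally sufficient for the $\poly(n+m)$ running time.
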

\begin{proof}
We adapt the proof of Theorem~\ref{thm:algo_bin}. Observe that \AlgoSCK$(X,[k])$ makes at most $\min(k!,n)$ recursive calls; the $n$ in the $\min$ comes from the fact that any given (recursive) call learns the label of at least one unlabeled point. Now, every (recursive) call makes one invocation to \AlgoRound$(X)$, which by Lemma~\ref{lem:algoround} uses time $\poly(n+m)$ and $\scO(k^2 m^2 \log n)$ \lab\ queries, and $\scO(k^2)$ invocations to \CuttingPlanes$(X_i)$, each of which by Theorem~\ref{thm:cp} uses $\poly(n+m)$ time and $\scO\big(m \log \frac{m}{\gamma}\big)$ \seed\ queries.
\end{proof}

\begin{algorithm2e}[h!] 
\caption{\AlgoSCK$(X,\mathbf{k})$}
\DontPrintSemicolon
\SetAlgoVlined 
$k \leftarrow |\mathbf{k}|$\;
\lIf{$k=1$}{query any point of $X$ and label all of $X$ accordingly}
\Else{
    $((X_i,E_i))_{i \in [k]} \leftarrow$ \AlgoRound$(X)$\;
    \For{$i \in \mathbf{k}$}{
        change system of coordinates so that $E_i$ becomes the unit ball\;
        \For{$j \in \mathbf{k} \setminus i$}{
            $(C_{ij}, \overline{C_{ij}}) \leftarrow \CuttingPlanes(X_i)$ with $h : X_i \to \{i,j\}$\label{line:cpij}\;
        }
        $\widehat C_i \leftarrow \bigcap_{j \in \mathbf{k} \setminus i} {C_{ij}}$\;
        mark all of $\widehat C_i$ with label $i$\;
        \lIf{$X_i \setminus \widehat C_i \ne \emptyset$}{\AlgoSCK$(X_i \setminus \widehat C_i, \mathbf{k} \setminus i)$}\label{line:recur}
    }
}
\end{algorithm2e}

\section{Lower Bounds}
\label{sec:lb}
This section gives a detailed sketch of the proof of Theorem~\ref{thm:lb}, recalled here for convenience:
\lowerbound*

\noindent We first give the sketch for $k=2$, and then extend it to $k \ge 2$. For a full proof see Appendix~\ref{apx:lb}.
\textbf{Set-up.} The construction is adapted from Proposition 2 of~\cite{thiessen2021active}. Let $e_1,\ldots,e_m$ be the canonical basis of $\R^m$ and let $\ell = \left\lfloor \nicefrac{1}{\sqrt{2 \gamma \sqrt{m}}} \right\rfloor$; note that $\gamma \le \frac{m^{-3/2}}{16}$ and $m \ge 2$ ensure $\ell \ge 4$. Let $p=m-1$, and for each $i\in[p]$ and $j \in [\ell]$ define $x_i^j = e_i + j \cdot e_m$. Finally, let $X = \{x_i^j : i \in [p], j \in [\ell]\}$ and define the concept class $\Hyp = \left\{ \bigcup_{i \in [p]} \{x_i^1, \ldots, x_i^{\ell_i}\} : (\ell_1,\ldots,\ell_{p}) \in [\ell]^{p} \right\}$. Let $\C=(C_1,C_2)$ be any partition of $X$ such that $C_1 \in \Hyp$. One can easily verify that $\C$  has strong convex hull margin $\frac{1}{2\ell^2\sqrt{m}}\ge\gamma$. See Figure~\ref{fig:lb} for reference.

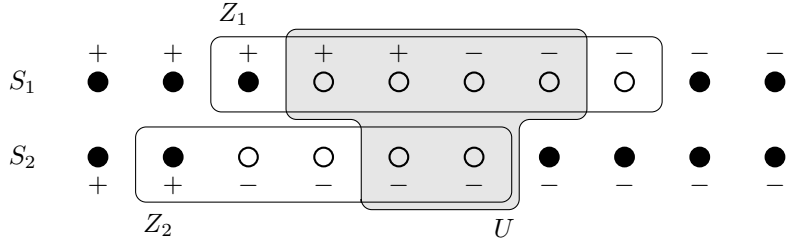
\begin{figure}[h]
    \centering

\begin{tikzpicture}
	[agreement/.style={circle,thick, draw=black,fill=black,inner sep=2.5pt},
	disagreement/.style={circle,thick, draw=black,fill=none,inner sep=2.5pt},
	every edge/.append style={very thick}
	]
	
	\node at (-5.5,0.5) {$S_1$};
	\node[agreement, label=above:{$+$}] at (-4.5,0.5) {};
	\node[agreement, label=above:{$+$}]  at (-3.5,0.5) {};
	\node[agreement, label=above:{$+$}]  at (-2.5,0.5) {};
	\node[disagreement, label=above:{$+$}]  at (-1.5,0.5) {};
	\node[disagreement, label=above:{$+$}]  at (-0.5,0.5) {};
	\node[disagreement, label=above:{$-$}]  at (0.5,0.5) {};
	\node[disagreement, label=above:{$-$}]  at (1.5,0.5) {};
	\node[disagreement, label=above:{$-$}]  at (2.5,0.5) {};
	\node[agreement, label=above:{$-$}]  at (3.5,0.5) {};
	\node[agreement, label=above:{$-$}]  at (4.5,0.5) {};
	\node at (-5.5,-0.5) {$S_2$};
	\node[agreement, label=below:{$+$}]  at (-4.5,-0.5) {};
	\node[agreement, label=below:{$+$}]  at (-3.5,-0.5) {};
	\node[disagreement, label=below:{$-$}]  at (-2.5,-0.5) {};
	\node[disagreement, label=below:{$-$}]  (v1) at (-1.5,-0.5) {};
	\node[disagreement, label=below:{$-$}]  at (-0.5,-0.5) {};
	\node[disagreement, label=below:{$-$}]  at (0.5,-0.5) {};
	\node[agreement, label=below:{$-$}]  at (1.5,-0.5) {};
	\node[agreement, label=below:{$-$}]  at (2.5,-0.5) {};
	\node[agreement, label=below:{$-$}]  at (3.5,-0.5) {};
	\node[agreement, label=below:{$-$}]  at (4.5,-0.5) {};
	\draw[rounded corners] (-3,1.1)--(-3,0.1)--(3,0.1)--(3,1.1)--cycle;
	\node at (-2.7, 1.4) {$Z_1$};
	\draw[rounded corners] (-4,-0.1)--(-4,-1.1)--(1,-1.1)--(1,-0.1)--cycle;
	\node at (-3.7, -1.4) {$Z_2$};
    \draw[rounded corners, fill=black, fill opacity=.1]  (-2,1.2)--(-2,0)--(-1,0)--(-1,-1.2)--(-1,-1.2)--(1.1,-1.2)--(1.1,0)--(2.,0)--(2,1.2)--cycle;
	
	\node at (0.9, -1.45) {$U$};

\end{tikzpicture}
    \caption{$X$ for $p=2$ and $\ell=10$. Filled points represent the agreement region. The maximum point of $S_1\cap C_1$ (resp. $S_2\cap C_1$) can be any point in $Z_1$  (resp. $Z_2$). $U$ is a possible query.}
    \label{fig:lb}
\end{figure}

\textbf{Query bound.} Let $V_0=\{(C_1,C_2) : C_1 \in \Hyp\}$. This is the initial version space. We let the target concept $\C=(C_1,C_2)$ be drawn uniformly at random from $V_0$. Note that for $k=2$, any lower bound on the number of \seed\ queries alone, also holds for any combination of \seed\ and \lab\ queries, as $\lab(x)$ can be simulated by $\seed({x},1)$. Thus, without loss of generality, we can assume that the algorithm is only using \seed\ queries. For all $t=0,1,\ldots$, we denote by $V_t$ the version space after the first $t$ \seed\ queries made by the algorithm. Now fix any $t \ge 1$ and let $\seed(U,y)$ be the $t$-th such query. Without loss of generality we assume $y=1$; a symmetric argument applies to $y=2$. If $U \cap C_1$ contains a point $x$ whose label can be inferred from the first $t-1$ queries, then we return $x$. Therefore we can continue under the assumption that $U$ does not contain any such point (doing otherwise cannot reduce the probability that the algorithm learns nothing). The oracle answers so to maximize $\frac{|V_t|}{|V_{t-1}|}$, as described below.

For each $i \in [p]$ let $S_i=\{x_i^j : j \in [\ell]\}$. We consider $S_i$ as sorted by the index $j$. Let $Z_i$ be the subset of $S_i$ in the disagreement region of $V_{t-1}$ together with the point in $S_i$ preceding this region; observe that this point always exists, as $x^1_i\in C_1$ is in the agreement region. Note that $Z_i$ is necessarily an interval of $S_i$. We let $U_i = Z_i \cap U$ for each $i \in [p]$ and $P(U) = \{i \in [p] : U_i \ne \emptyset \}$. For every $i \in P(U)$, we let $\alpha_i$ be the fraction of points of $Z_i$ that precede the first point in $U_i$. Let $x_i^*=\argmax\{j : x_i^j\in S_i\cap C_1\}$. Observe that $|V_{t-1}|=\prod_{i \in [p]} |Z_i|$.
Indeed, $x_i^*$ is uniformly distributed over $Z_i$; either $x_i^*$ is a point in the disagreement region of $S_i$, or the disagreement region of $S_i$ is fully contained in $C_2$ and $x_i^*$ is the point preceding the disagreement region of $S_i$.
 
Now we show that $\E[|V_{t-1}|/|V_t|] \le m$. Let $\ev$ be the event that $\seed(U,1) = \nil$. Write:
\begin{align}
    \E\left[\frac{|V_{t-1}|}{|V_t|}\right] = \Pr(\ev) \, \E\left[\frac{|V_{t-1}|}{|V_t|} \,\Big|\, \ev\right] + \Pr(\overline \ev) \, \E\left[\frac{|V_{t-1}|}{|V_t|} \,\Big|\, \overline \ev \right] \label{eq:E_Vt}
\end{align}
We bound each one of the two terms in the right-hand side.

For the first term, note that $\ev$ holds if and only if $U_i \cap C_1 = \emptyset$ for all $i \in P(U)$. Since $x_i^*$ is uniformly distributed over $Z_i$, for all $i \in P(U)$ we have $\Pr(C_1 \cap U_i = \emptyset) = \alpha_i$, and since the distributions of those points are independent, then $\Pr(\ev) = \prod_{i \in P(U)} \alpha_i$.
If $\Pr(\ev) > 0$ and $\ev$ holds, then $x_i^*$ is uniformly distributed over the first $\alpha_i |Z_i|$ points of $Z_i$, as the rest of $Z_i$ belongs to $C_2$. This holds independently for all $i$, thus:
\begin{align}
    |V_t| = \Bigg(\prod_{i \in P(U)} \alpha_i |Z_i|\Bigg)  \Bigg(\prod_{i \in [p] \setminus P(U)} |Z_i|\Bigg) = \Bigg(\prod_{i \in P(U)} \alpha_i\Bigg) \Bigg(\prod_{i \in [p]} |Z_i|\Bigg) = |V_{t-1}| \prod_{i \in P(U)} \alpha_i
\end{align}
It follows that $\Pr(\ev) \E\left[\frac{|V_{t-1}|}{|V_t|} \,\Big|\, \ev\right] \leq 1$.

Let us turn to the second term. If $\ev$ does not hold, then $\seed(U,1)$ returns the smallest point $x \in U_i$ for any $i \in P(U)$ such that $C_1 \cap U_i \ne \emptyset$ (note that necessarily $x \in C_1$).
For any fixed $i \in P(U)$, the probability of returning the smallest point of $U_i$ is bounded by $\Pr(C_1 \cap U_i \ne \emptyset)$, which is $1-\alpha_i$; and if this is the case, then we have $|V_t|=(1-\alpha_i)|V_{t-1}|$. Thus:
\begin{align}
    \Pr(\overline \ev) \E\left[\frac{|V_{t-1}|}{|V_t|} \,\Big|\, \overline \ev \right] \le \Pr(\overline \ev) \max_{i \in P(U)} (1-\alpha_i)\frac{1}{(1-\alpha_i)} = \Pr(\overline \ev) \le 1
\end{align}
So the two terms of~\eqref{eq:E_Vt} are both bounded by $1$; we conclude that $\E\left[\frac{|V_{t-1}|}{|V_t|}\right] \le 2$.

Next, fix any $\bar t \ge 1$ and let $\log = \log_2$. By the concavity of $\log$ and by Jensen's inequality:
\begin{align}
    \E\left[\log\frac{|V_0|}{|V_{\bar t}|}\right]
    &= \E\left[\sum_{t=1}^{\bar t}\log\frac{|V_{t-1}|}{|V_t|}\right]
    = \sum_{t=1}^{\bar t}\E\left[\log\frac{|V_{t-1}|}{|V_t|}\right]
    \le \sum_{t=1}^{\bar t}\log\E\left[\frac{|V_{t-1}|}{|V_t|}\right]
\end{align}
Since $\E\left[\frac{|V_{t-1}|}{|V_t|}\right] \le 2$, the right-hand side is at most ${\bar t}$. Now, since $|V_0|=\ell^{p}=\ell^{m-1}$, by Markov's inequality, and since $(m-1) \log \ell - \log 2 \ge \frac{(m-1) \log \ell}{2} \ge \frac{m \log \ell}{4}$:
\begin{align}\label{eq:pr_Vt}
    \Pr(|V_{\bar t}| \le 2) = \Pr\!\left(\log\frac{|V_0|}{|V_{\bar t}|} \ge (m-1) \log \ell - \log 2 \right) \le \frac{4\,\E \!\left[\log\frac{|V_0|}{|V_{\bar t}|}\right]}{m \log \ell} \le \frac{4\, \bar t}{m \log \ell}
\end{align}
Now let $T$ be the random variable counting the number of queries spent by the algorithm, and let $V_T$ be the version space at return time. Since $\C$ is uniform over $V_T$ and $\C$ is returned with probability at least $\frac{1}{2}$, then $\Pr(|V_T| \le 2) \ge \frac{1}{2}$. By~\eqref{eq:pr_Vt} and linearity of expectation,
\begin{align}\label{eq:pr_VT}
    \frac{1}{2} \le \Pr(|V_{T}| \le 2) \le \sum_{\bar t \ge 0} \Pr(T=\bar t) \cdot \frac{4 \bar t}{m \log \ell} = \E[T] \frac{4}{m \log \ell} 
\end{align}
Therefore $\E[T] \ge \frac{m \log \ell} {4}$. Now, since $\ell \ge 4$ then $\ell \ge \frac{4}{5\sqrt{2 \gamma \sqrt{m}}}$, which since $m \le (16\gamma)^{-2/3}$ yields, after calculations, $\ell \ge \sqrt[3]{\nicefrac{1}{\gamma}}\cdot \frac{4^{4/3}}{5\sqrt{2}} > 0.89 \sqrt[3]{\nicefrac{1}{\gamma}}$.
This shows that $E[T] > \frac{m}{24} \log \frac{1}{2\gamma}$,
concluding the proof for $k=2$.

\textbf{Extension to k $\ge$ 2.} For each $s\in \left\lfloor\frac{k}{2}\right\rfloor$ and each pair of classes $C_{2s-1},C_{2s}$, use the construction above shifted along the $m$-th dimension by $(s-1)\ell$. One can easily verify that learning $\C$ is as hard as learning $\left\lfloor\frac{k}{2}\right\rfloor$ independent binary classifiers, for each of which the bound above holds. 

\bibliography{biblio}
\bibliographystyle{plainnat}

\appendix

\section{Appendix for Section~\ref{sec:ub}}\label{apx:ub}

\subsection{Proof of Lemma~\ref{lem:ar}}
If $\mu_i$ is the center of $E_i$, then $E_i = B_{p_i}(\mu_i,1)$. Let $d_i$ be any pseudometric witnessing that $C_i$ has strong convex hull margin $\gamma > 0$. As the margin is invariant under scaling, we can assume $\diam_{d_i}(C_i)=1$ and $\conv(C_i) \subseteq B_{d_i}(z_i,1)$ for some $z_i \in \R^m$. Therefore:
\begin{align}
    B_{p_i}(\mu_i,1) = E_i \le \alpha \conv( C_i) \subseteq \alpha B_{d_i}(z_i,1)
\end{align}
As $p_i$ and $d_i$ are homogeneous and invariant under translation this implies $p_i \ge \frac{d_i}{\alpha}$ and thus $ p_i(\conv(X_i \cap C_j), \conv(X_i \cap C_i)) \ge \frac{1}{\alpha} d_i(\conv(X_i \cap C_j), \conv(X_i \cap C_i))$. Moreover, by monotonicity under taking subsets and by the margin assumption $d_i(\conv(X_i \cap C_j), \conv(X_i \cap C_i)) \ge d_i(\conv(C_j), \conv(C_i)) \ge \gamma \diam_{d_i}(C_i) = \gamma$. Combining the two inequalities yields the thesis.

\subsection{Claim 1}
\begin{claim}\label{lem:affine_union}
Let $K \subset \R^m$ be a convex body, let $E \supseteq K$ be any enclosing ellipsoid, and let $\mu_{E}$ be the centroid of $E$. Let $f(x)=A x + \mu$ be an affine transformation with $\norm{2}{A} \le \lambda$ and $\mu \in K$. Then for any $x \in K$ we have $f(x) \in \sigma(E,\mu_E,\lambda+1)$.
\end{claim}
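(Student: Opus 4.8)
The plan is to reduce to the case where $E$ is the Euclidean unit ball $B^m(0,1)$ centered at the origin, after which the statement collapses to a one-line triangle inequality.

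First I would record the elementary fact that scaling about a point commutes with affine maps: for any affine $T:\R^m\to\R^m$, any set $S\subseteq\R^m$, any center $c$, and any $\lambda>0$, one has $T\big(\sigma(S,c,\lambda)\big)=\sigma\big(T(S),T(c),\lambda\big)$, which is immediate from $T\big(c+\lambda(s-c)\big)=T(c)+\lambda\big(T(s)-T(c)\big)$. Choosing an affine $T$ with $T(E)=B^m(0,1)$ (such $T$ exists since $K$ is a convex body, so $E$ is a genuine ellipsoid), and using that $T$ maps the centroid of $E$ to the centroid of $B^m(0,1)$, i.e.\ $T(\mu_E)=0$, I would replace $K,E,\mu,f$ by $T(K),T(E),T(\mu),T\circ f\circ T^{-1}$; by the commuting fact the claim then becomes the same statement with $E=B^m(0,1)$ and $\mu_E=0$. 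In this normalization $K\subseteq B^m(0,1)$, and $\mu\in K$ forces $\norm{2}{\mu}\le 1$.

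Then it is enough to observe that for every $x\in K$ we have $\norm{2}{x}\le 1$, whence
\[
\norm{2}{f(x)}=\norm{2}{Ax+\mu}\le\norm{2}{A}\,\norm{2}{x}+\norm{2}{\mu}\le\lambda+1,
\]
that is $f(x)\in B^m(0,\lambda+1)=\sigma(E,\mu_E,\lambda+1)$; undoing $T$ finishes the proof. The one place that needs care — and the only real obstacle I foresee — is the reduction step: conjugating $f$ by $T$ turns its linear part $A$ into $MAM^{-1}$, where $M$ is the linear part of $T$, so one must check that $\norm{2}{MAM^{-1}}\le\lambda$ is still preserved. This is a non-issue in the intended use inside Lemma~\ref{lem:algoround}, where $A$ is a positive scalar multiple of the identity and is therefore fixed by conjugation; equivalently, one may simply read $\norm{2}{\cdot}$ as the norm making $E$ the unit ball, in which case no reduction is needed and the displayed inequality is the entire argument.
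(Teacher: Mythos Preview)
Your proof is essentially the same as the paper's: both reduce to $E$ being the Euclidean unit ball and then bound $\norm{2}{f(x)}$ by $\lambda+1$ (the paper squares and uses Cauchy--Schwarz, you apply the triangle inequality directly, which is cleaner). You are actually more careful than the paper about the reduction step: the paper simply says ``apply an appropriate affine transformation at the beginning of the proof, and its inverse at the end'' without noting that conjugation need not preserve $\norm{2}{A}$, whereas you flag this and correctly observe that in the intended application $A$ is a scalar multiple of the identity, so the issue vanishes.
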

\begin{proof}
Without loss of generality, we can assume $K$ to be full rank. We can also assume $E$ to be the $\ell_2$ unit ball; otherwise, just apply an appropriate affine transformation at the beginning of the proof, and its inverse at the end. Under these assumptions, for all $x \in K$ we have $\norm{2}{x} \le 1$, and since $\norm{2}{\mu} \le 1$ as well, we obtain:
\begin{align}   
    \norm{2}{f(x)}^2 &= \norm{2}{A x}^2 + \norm{2}{\mu}^2 + 2 \dotp{A x, \mu} \le \lambda^2 + 1 + 2 \lambda = (\lambda+1)^2
\end{align}
which implies $f(x) \in (\lambda+1)E$.
\end{proof}

\subsection{Proof of Lemma~\ref{lem:algoround}}
First, we prove that $E_i \le m^2(m+1) \conv(C_i)$ for all $i \in [k]$. This is trivial if $E_i=\emptyset$, so assume $E_i \ne \emptyset$ and let $\ell_i \ge 1$ be the value of $h_i$ at return time. For every $h=1,\ldots,\ell_i$ let $E_i^h=\MVE(S_i^h)$ and let $\mu_i^h$ be the center of $E_i^h$. If $\mu_i$ is the center of $E_i$
then by John's theorem $\sigma\!\left(E_i,\mu_i,\frac{1}{m}\right) \subseteq \conv(X_i)$, and since $X_i \subset \bigcup_{h=1}^{\ell_i} E_i^h$, then $\conv(X_i) \subseteq \conv\left(\bigcup_{h=1}^{\ell_i} E_i^h\right)$. Moreover $E_i^h \subseteq \sigma\big(\conv(S_i^h), \mu_i^h, m\big)$ for all  $h \in [\ell_i]$, which yields:
\begin{align} \label{eq:sigma_EC}
    \sigma\!\left(E_i,\mu_i,\frac{1}{m}\right)
    \subseteq \conv \bigcup_{h=1}^{\ell_i} \sigma\big(\conv(S_i^h), \mu_i^h, m\big)
\end{align}
Thus we need only to show that the right-hand side is in $\sigma(\conv(C_i), \mu, m(m+1))$ for some $\mu \in \R$.

Let $S_i = \cup_{h=1}^{\ell_i} S_i^h$, let $E = \MVE(S_i)$, and let $\mu$ be the center of $E$. (Note that in general $E \ne E_i$). For every $h \in [\ell_i]$, by applying Claim~\ref{lem:affine_union} from Appendix~\ref{apx:ub} to $f(x) = \sigma(x, \mu_i^h, m)$ and by John's theorem:
\begin{align}
    \sigma\big(\conv(S_i^h), \mu_i^h, m\big) \subseteq \sigma(E, \mu, m+1)
    \subseteq \sigma(\conv(S_i), \mu, m(m+1))
\end{align}
By taking the union over all $h \in [\ell_i]$, and since $\conv(S_i) \subseteq \conv(C_i)$, we obtain:
\begin{align}\label{eq:cup_h_sigma}
    \bigcup_{h=1}^{\ell_i} \sigma\big(\conv(S_i^h), \mu_i^h, m\big) \subseteq \sigma(\conv(C_i), \mu, m(m+1))
\end{align}
As the right-hand side is a convex set,~\eqref{eq:cup_h_sigma} still holds if the left-hand side is replaced by its own convex hull; but that convex hull is the right-hand side of~\eqref{eq:sigma_EC}, which proves the sought claim.

We conclude the proof. For the correctness, since $E_i \le m^2(m\!+\!1) \conv(C_i)$, and since the updates at lines~\ref{line:AR:E_i} and~\ref{line:AR:update} guarantee that $(X_i)_{i \in [k]}$ is a partition of $X$, then $((X_i,E_i))_{i \in [k]}$ is an $m^2(m\!+\!1)$-rounding of $X$. 
For the running time, the \textbf{for} loops perform $k \le n$ iterations, and the \textbf{while} loop performs at most $n$ iterations as each iteration strictly decreases the size of $X$. The running time of any iteration is dominated by the computation of $\MVE(S_i)$ or $\MVE(X_i)$, which takes time $\poly(n+m)$, see above. Hence \AlgoRound$(X,k)$ runs in time $\poly(n+m)$. For the query bounds, the \textbf{while} loop makes $\scO(m^2 k)$ \lab\ queries per iteration. By standard generalization bounds, since the VC dimension of ellipsoids in $\R^m$ is $\scO(m^2)$, $E_i^{h}$ contains at least half of $X \cap C_i$ with probability at least $\frac{1}{2}$, and thus the expected number of rounds before $X$ becomes empty is in $\scO(k \lg n)$, see~\cite{BCLP21}. We conclude that \AlgoRound$(X,k)$ uses $\scO(m^2 k^2 \lg n)$ \lab\ queries in expectation.

\subsection{Pseudocode of \CuttingPlanes\ and full proof of Theorem~\ref{thm:cp}}\label{apx:cplearn_code}
\begin{algorithm2e}[h!]
\DontPrintSemicolon
\SetAlgoVlined 
\caption{\CuttingPlanes$(X)$}
\lIf{$\seed(X,+1)=\nil$}{\Return $(\emptyset,X)$ \label{line:cp_retX}}
\lIf{$\seed(X,-1)=\nil$}{\Return $(X,\emptyset)$ \label{line:cp_retX2}}
$R \leftarrow \max_{x \in X} \norm{2}{x}$\;
$X' \leftarrow \{(x,R) : x \in X\}$\;
$i \leftarrow 0, V_0 \leftarrow B(0,1)$ in $\R^{m+1}$\;
\For{$i \leftarrow 0,\ldots,n$}{
    \If{$i=0$}{
        $\hat \mu_i \leftarrow $ any vector in $S^{m-1}(0,1)$\;
	    $X_i' \leftarrow \{x' \in X' : \dotp{\hat \mu_i, x'} \ge 0\}$\;
	    $X_i \leftarrow$ projection of $X_i'$ on $\R^m$\;
	    \If{\seed$(X_i,-1) = \nil$ and \seed$(X \setminus X_i,+1) = \nil$}{
    		\Return $(X_i,X \setminus X_i)$\;
    	}
	    \Else{
		    let $u_i$ be any point returned by either query\;
		    $V_{i+1} \leftarrow V_i \cap \{x' \in \R^{m+1} : h(u_i) \cdot \dotp{u_i,x'} \ge 0\}$\;
	    }
    }
    \Else{
        draw $\poly(m+n)$ points $z_1,\ldots,z_N$ independently uniformly at random from $V_i$\;
        $\hat \mu_i \leftarrow \frac{1}{N} \sum_{j=1}^N z_j$\;
    	$X_i' \leftarrow \{x' \in X' : \dotp{\hat \mu_i, x'} \ge 0\}$\;
    	$X_i \leftarrow$ projection of $X_i'$ on $\R^m$\;
    	\If{\seed$(X_i,-1) = \nil$ and \seed$(X \setminus X_i,+1) = \nil$}{
    		\Return $(X_i,X \setminus X_i)$\;
    	}
    	\Else{
    		let $u_i$ be any point returned by either query\;
    		$u_i^* \leftarrow u_i - z_0 \cdot \frac{\dotp{u_i,\hat \mu_i}}{\dotp{z_0,\hat\mu_i}}$ where $z_0=h(u_0)\cdot u_0$\;
    		$V_{i+1} \leftarrow V_i \cap \{x' \in \R^{m+1} : h(u_i) \cdot \dotp{u_i^*,x'} \ge 0\}$\;
    	}
}
        draw points independently uniformly at random from $V_i$ until $N=\poly(m+n)$ of them, $z_1,\ldots,z_N$, fall in $V_{i+1}$\;
        use the covariance matrix of $\{z_1,\ldots,z_N\} \cap V_{i+1}$ to compute a coordinate system under which $V_{i+1}$ is $t$-rounded\;
        }
\end{algorithm2e}

We describe how to construct \CuttingPlanes\ step by step. To begin, we issue \seed($X,+1$) and \seed($X,-1$), and if either one returns \nil\ then we immediately return $(\emptyset,X)$ or $(X,\emptyset)$ accordingly.
Otherwise, for the sake of the proof we suppose $h^{-1}(*) = \emptyset$. It is immediate to verify that the algorithm works as it is when $h^{-1}(*) \ne \emptyset$, too, since \seed\ never returns points in $h^{-1}(*)$ and thus, as far as our algorithm is concerned, it behaves identically on $X$ and on $X \setminus h^{-1}(*)$. 

\textbf{Lifting and reduction to the homogeneous case.}
For any $z \in \R^m$ and any $c \in \R$, let $(z,c) \in \R^{m+1}$ be the vector obtained by extending $z$ with a coordinate equal to $c$. For each $x \in X$ let $x'=(x,R)$, and let $X' = \{x' : x \in X\}$. Extend $h$ to $X'$ in the natural way by defining $h(x')=h(x)$ for any $x' \in X'$. We claim that $\{x' \in X' : h(x')=+1\}$ and $\{x' \in X' : h(x')=-1\}$ are separated by a homogeneous hyperplane with margin $\frac{r}{2}$. To see this, let $u \in S^{m-1}$ and $b \in \R$ such that $h(x) \cdot (\dotp{x,u} + b) \ge r$ for all $x \in X$ with $h(x) \ne *$; such $u$ and $b$ exist by the assumptions of the theorem, and note that $b \le R$.
Now let $v=(u,\nicefrac{b}{R})$ and let $u' = \frac{v}{\norm{2}{v}}$; note that $\norm{2}{v} \le \norm{2}{u}+\frac{b}{R} \le 2$. Then, for every $x' \in X'$:
\begin{align}
    \dotp{x',u'} = \frac{\dotp{x',v}}{\norm{2}{v}} = \frac{\dotp{x,u} + R \cdot \nicefrac{b}{R}}{\norm{2}{v}} = \frac{\dotp{x,u} + b}{\norm{2}{v}} 
\end{align}
which implies:
\begin{align}
    h(x') \cdot \dotp{x',u'} =  \frac{h(x) \cdot (\dotp{x,u} + b)}{\norm{2}{v}} \ge \frac{r}{\norm{2}{v}} \ge \frac{r}{2} \label{eq:lifted_margin}
\end{align}
Thus we have reduced the original problem to computing a homogeneous separator with margin.

\textbf{Cutting planes.}
To learn a homogeneous separator we use cutting planes---see, e.g., \cite{mitchell2003polynomial}. Let $V_0$ be the $(m+1)$-dimensional unit ball, which we denote by $B(0,1)$. For all $i \ge 0$ define $V_{i+1}$ as follows. Let $\mu_i$ be the center of mass of $V_i$, let
\begin{align}\label{eq:H_i}
	H_i = \{ x' \in \R^{m+1} : \dotp{\mu_i,x'} \ge 0 \}
\end{align}
and let $X_i' = X' \cap H_i$. Execute \seed$(X_i',-1)$ and \seed$(X' \setminus X_i', +1)$. If both return \nil\ then return $(X_i, X \setminus X_i)$ where $X_i$ is the projection of $X_i'$ on $\R^m$. If either one returns a point $u_i$, let
\begin{align}\label{eq:Z_i}
	Z_i = \{x' \in \R^{m+1} : h(u_i) \cdot \dotp{u_i,x'} \ge 0\}
\end{align}
and let $V_{i+1} = V_i \cap Z_i$. By~\citep[Theorem 2]{Navot2004-BPM}, this procedure returns a separator of $X$ w.r.t.\ $h$ by making at most $\frac{2m}{\log\frac{e}{e-1}} \log \frac{4R}{r/2} = \scO(m \log \frac{R}{r})$ queries. Unfortunately, computing $\mu_i$ is hard in general. Instead we compute a point $\hat \mu_i$ that with high probability has Tukey depth in $[c_1,c_2]$ for some universal $0 < c_1 \le c_2 < 1$. This means that every halfspace $P \subseteq \R^{m+1}$ having $\hat \mu_i$ on its boundary satisfies $c_1 \vol(V_i) \le \vol(P \cap V_i) \le c_2 \vol(V_i)$. By letting $V_{i+1}=V_i \cap P$ for some such $P$, with high the volume of the version space decreases by a factor $c_2$ at every round; by the proof of \citep[Theorem 2]{Navot2004-BPM} this implies that we find our separator of $X$ w.r.t.\ $h$ using $\scO(m \log \nicefrac{R}{r})$ queries in expectation. But moreover the volume of the version space decreases by no more than $c_1$ at every round --- which is crucial, as we describe next.

\textbf{Keeping the version space rounded.}
Let us outline the strategy of the algorithm. We say a convex body $K \subset \R^{m+1}$ is $t$-rounded if $B(0,t) \subseteq K \subseteq B(0,1)$. For every $i=0,1,\ldots$ we maintain the invariant that with high probability, say $1-n^{-c}$ for some universal constant $c$, $V_i$ is $t$-rounded for $t=\Omega(\nicefrac{1}{m})$. More precisely, at each round $i=0,1,\ldots$ we compute a ``temporary'' coordinate system under which $V_i$ is $t$-rounded. Note that $V_0$ is trivially $t$-rounded under the canonical coordinate system given by the canonical basis of $\R^{m+1}$.

Suppose then that, at the beginning of the $i$-th round, $V_i$ is $t$-rounded under some temporary coordinate system.
Then we can efficiently sample points $\epsilon$-uniformly from $V_i$ using the hit-and-run algorithm of~\cite{lovasz2006hit}. More precisely, if we start hit-and-run from the origin, we obtain an $\epsilon$-uniform sample after $\scO\left(m^5 \ln\frac{m}{\epsilon}\right)$ steps. As every step can be implemented in time polynomial in the representation of $V_i$, see~\cite{BCLP21}, and the representation of $V_i$ has size $\scO(m+n)$ since $i \le n$ and every constraint requires $\scO(m)$ bits, then we can sample a $\eps$-uniform point from $K$ in time $\poly(n,m,\ln\nicefrac{1}{\epsilon})$. As shown in~\cite{BCLP21}, for any $\eta,p > 0$, if we set $N=\Theta(m^2/\eta^2p^2)$ and $\epsilon=\Theta(\eta/m)$ then the average $\hat\mu_i$ of $N$ independent $\eps$-uniform samples from $V_i$ satisfies $\Pr(d(\hat\mu_i,\mu_i) \le \eta \diam(V_i)) \ge 1-p$ where $\diam(V_i)$ is the Euclidean diameter of $V_i$. Since $\diam(V_i) \le 2$ as $V_i$ is $t$-rounded, choosing $\eta=1/2m^2$ yields $\Pr(d(\hat\mu_i,\mu_i) \le 1/m^2) \ge 1-p$. We then let $p=n^{-c}/2$ and condition on the good event that $d(\hat\mu_i,\mu_i) \le 1/m^2$. It is not hard to see that any halfspace $P$ having $\hat\mu_i$ on its boundary satisfies $\vol(P \cap V_i) \ge \frac{1}{e}(1-\frac{1}{m})^{m+1} \vol(V_i) = \Omega(\vol(V_i))$, which (by taking $\R^m \setminus P$ as well) implies that $\hat \mu_1$ has Tukey depth in $[c_1,c_2]$ for some universal constants $0<c_1\le c_2<1$ as desired.

We then choose a particular halfspace $P$ having $\hat \mu_i$ on the boundary, denoted by $Z_i^*$, as described below. Then, we set $V_{i+1} = V_i \cap Z_i^*$. Finally, we compute the temporary coordinate system under which $V_{i+1}$ is w.h.p.\ $t$-rounded. To this end we draw again points independently and $\eps$-uniformly at random from $V_i$. 
Since $\vol(V_{i+1}) \ge c_1 \vol(V_i)$, any such sample ends in $V_{i+1}$ with probability at least $c_1-\eps$, hence as long as $\eps < c_1/2$, with probability $1-e^{-\Theta(N)}$ to collect $N$ samples $z_1,\ldots,z_N$ in $V_{i+1}$ we need to draw $\Theta(N)$ samples from $V_i$. Moreover, the $N$ samples in $V_{i+1}$ will be $\frac{\eps}{c_1}$-uniform therein.
At this point, from the covariance matrix of $z_1,\ldots,z_N$ we can then compute an affine transformation that with probability $1-n^{-c}/2$ makes $V_{i+1}$ again $t$-rounded for $t=\Omega(\nicefrac{1}{m})$; see for instance~\cite{vempala10}. By a union bound, then, the round yields with probability $1-n^{-c}$ a temporary coordinate system under which $V_{i+1}$ is $t$-rounded.

We now discuss how to choose $Z_i^*$.

\textbf{Cutting the version space.}
Consider round $i$, and suppose we have successfully computed $\hat\mu_i$ as described above. We compute the halfspace $H_i$ defined by $\hat \mu_i$, and we invoke \seed\ as described above, taking care of excluding any point for which we already know the label. Suppose \seed\ returns $u_i$, and consider the homogeneous halfspace $Z_i$ defined by~\eqref{eq:Z_i}. If we set $V_{i+1} = V_i \cap Z_i$ as anticipated above, then $V_{i+1}$ might be very thin along some direction and/or very small in volume. This means $V_{i+1}$ could be very far from being $t$-rounded (because of its thinness), and we could need too many samples to round it again (because of the small volume). Thus setting $V_{i+1} = V_i \cap Z_i$ might invalidate the $t$-rounding invariant.

We bypass this obstacle as follows. Let $z_0 = h(u_0) \cdot u_0$. This is the normal vector associated with $Z_0$, hence $Z_0=\{x \in \R^{m+1} : \dotp{z_0,x} \ge 0\}$. Note that $V_i \subseteq Z_0$, and that $\dotp{z_0,\hat\mu_i}>0$ since $\hat\mu_i$ lies in the interior of $V_i$ (otherwise it would have Tukey depth $0$). We compute:
\begin{align}
    u_i^* = u_i - z_0 \cdot \frac{\dotp{u_i,\hat \mu_i}}{\dotp{z_0,\hat\mu_i}}
\end{align}
Note that $u_i^*$ is well-defined since $\dotp{z_0,\hat\mu_i}>0$ as noted above. Define:
\begin{align}
    Z_i^* = \{x \in \R^{m+1} \,:\, h(u_i) \cdot \dotp{u_i^*,x} \ge 0\}
\end{align}
Note that, for every $x \in \R^{m+1}$, the definition of $u_i^*$ and the linearity of the inner product yield:
\begin{align}\label{eq:dotp_uistar_x}
    \dotp{u_i^*, x} = \dotp{u_i, x} - \dotp{z_0, x} \cdot \frac{\dotp{u_i,\hat \mu_i}}{\dotp{z_0,\hat\mu_i}}
\end{align}
We then set $V_{i+1}=V_i \cap Z_i^*$. 
 
Now we make two crucial claims. The first one is that $V_i \cap Z_i \subseteq V_i \cap Z_i^*$. In fact, we claim that $Z_0 \cap Z_i \subseteq Z_0 \cap Z_i^*$, which suffices since $V_i \subseteq Z_0$. Let $x \in Z_0 \cap Z_i$. Then:
\begin{align} \label{eq:hui_star}
    h(u_i) \cdot \dotp{u_i^*, x} = h(u_i) \cdot \dotp{u_i, x} - h(u_i) \cdot \dotp{z_0, x} \cdot \frac{\dotp{u_i,\hat \mu_i}}{\dotp{z_0,\hat\mu_i}}
\end{align}
Let us examine the terms of~\eqref{eq:hui_star}. First, $h(u_i) \cdot \dotp{u_i, x} \ge 0$ since $x \in Z_i$. Second, $\dotp{z_0,x} \ge 0$ since $x \in Z_0$. Third, $\dotp{z_0,\hat{\mu}_i} > 0$ as noted above. Thus the term $-h(u_i) \cdot \dotp{z_0, x} \cdot \frac{\dotp{u_i,\hat \mu_i}}{\dotp{z_0,\hat{\mu}_i}}$ has the same sign as $-h(u_i) \cdot \dotp{u_i,\hat \mu_i}$. However, by definition $u_i$ is a counterexample to the labeling given by $H_i$, which means $h(u_i) \cdot \dotp{u_i,\hat \mu_i} < 0$. Therefore $h(u_i) \cdot \dotp{u_i^*, x} \ge 0$, which implies $x \in Z_i^*$ as desired. Therefore the target hypothesis is contained in $V_i \cap Z_i^*$, i.e., in $V_{i+1}$. This ensures that the algorithm is correct as if we used $Z_i$.
The second claim is that $\hat \mu_i$ lies inside $Z_i^*$, and in fact on its boundary, as desired. To this end just substitute $x=\hat\mu_i$ in~\eqref{eq:dotp_uistar_x} to see that $\dotp{u_i^*, \hat \mu_i}=0$.

\textbf{Wrap-up.}
First, observe that the algorithm makes at most $n$ rounds. Indeed, every round either returns (if the \seed\ queries return \nil) or learns the label of some point. Note also that, after having learned the label $h(u_i)$ of the counterexample $u_i$, the version space $V_{i+1}$ may contain hypotheses that label $u_i$ incorrectly. This is because $V_{i+1}$ is obtained from $V_i$ by intersecting with the ``relaxed'' constraint $Z_i^*$ rather than with the constraint $Z_i$ derived from $u_i,h(u_i)$. This however has the only the effect that $u_i$ may be included in the set passed to future \seed\ queries; to avoid this issue, we need only to remove $u_i$ from $X'$ after having learnt its label. Hence, the algorithm makes at most $n$ rounds before $X'$ becomes empty.

Now, at every round the $t$-rounding invariant is maintained with probability $1-n^{-c}$. Therefore, with probability $1-n^{1-c}$ the invariant holds at all rounds. If this is the case, every round takes time $\poly(n+m)$ with probability $1-e^{-\poly(n)}$; see the sampling from $V_{i+1}$ above.
We conclude that with high probability the algorithm has running time $\poly(m+n)$.

\subsection{One-sided margin}
\label{sec:oneside}
We sketch the proof of Theorem~\ref{thm:oneside}. Let $d$ be a metric over $\R^m$ induced by some norm $\norm{d}{\cdot}$. We say $C \subseteq X$ has one-sided strong convex hull margin $\gamma$ with respect to $d$ if $d(\conv(X \setminus C),\conv(C)) \ge \gamma \diam_d(C)$. 

The idea behind Theorem~\ref{thm:oneside} is to compute a Euclidean \emph{one-sided $\alpha$-rounding} of $X$ w.r.t.\ $h$, that is, a set $\widehat X \subseteq X$ such that $C \subseteq \widehat X$ and $\widehat X \le \alpha \conv(C)$, where $C = h^{-1}(+1)$. We will compute $\widehat X$ for $\alpha = \poly\big(\frac{\kappa_d}{\gamma}\big)$, and then use the cutting-planes algorithm of Section~\ref{sub:cp}.
As the margin is invariant under scaling, assume without loss of generality $\inf_{u \in S^{m-1}} \norm{d}{u}=1$ and $\sup_{v \in S^{m-1}}\norm{d}{v}=\kappa_d$.
Let $x=\seed(X,+1)$. If $x=\nil$ then clearly $h=-1$. Otherwise we run \BallSearch$(X,x)$, listed below.
\BallSearch\ sorts $X$ by distance from $x$, and then uses \lab\ queries to perform a binary search and find a pair of points $x_{\lo} \in C$ and $x_{\hi} \in X \setminus C$ adjacent in the ordering. (This works even if the order is not monotone w.r.t.\ the labels). At this point \BallSearch\ guesses a value $t$ for $\frac{\gamma}{\kappa_d}$, starting with $t = 1$.
Given $t$, with a \seed\ query \BallSearch\ checks if there are points of $C$ among the points at distance between $d_{\euc}(x,x_{\hi})$ and $\frac{1}{t}d_{\euc}(x,x_{\hi})$ from $x_{\hi}$. If not, then it lets $\widehat X = X \cap B(x,d_{\euc}(x,x_{\lo}))$, else it lets $\widehat X = X \cap B(x,\frac{1}{t}d_{\euc}(x,x_{\hi}))$. Finally, it checks whether $C \subseteq \widehat X$; if yes then it returns $\widehat X$, else it halves $t$ and repeat. One can show that this procedure stops with $t \ge \frac{\gamma}{2\kappa_d}$, yielding a $\widehat X$ such that $\diam(\widehat X)=\scO(\diam(C)/t)$ and that $C$ and $\widehat X \setminus C$ are linearly separated with margin $\Omega\big(t\frac{\gamma}{\kappa_d}\diam(\widehat X)\big)$. Setting $R=\diam(\widehat X)$ and $r=d_{\euc}(C,\widehat X\setminus C)$, we conclude that $\frac{R}{r}= \poly\big(\frac{\kappa_d}{\gamma}\big)$. At this point by Theorem~\ref{thm:cp} we can compute $C$ by running \CuttingPlanes$(\widehat X)$, which takes time $\poly(n+m)$ and uses $\scO\big(m \log \frac{\kappa_d}{\gamma}\big)$ \seed\ queries in expectation.

\begin{algorithm2e}[h!]
\DontPrintSemicolon
\SetAlgoVlined 
\caption{\BallSearch$(X,x_1)$}
let $x_1,\ldots,x_n$ be the points of $X$ in order of Euclidean distance from $x_1$ (break ties arbitrarily)\;
\lIf{$\lab(x_n)=+1$}{\Return $X$ \label{line:retX}}
$\lo \leftarrow 1$, $\hi \leftarrow n$\;
\While{$\hi - \lo \ge 2$}{ %
    $i \leftarrow \left\lceil \frac{\hi + \lo}{2}\right\rceil$\;
    \lIfElse{$\lab(x_i)=1$}{$\lo \leftarrow i$}{$\hi \leftarrow i$}
}
$t \leftarrow 1$,~~$r \leftarrow d_{\euc}(x_1,x_{\lo})$,~~$R \leftarrow d_{\euc}(x_1,x_{\hi})$\;
\Repeat{$\seed(X \setminus \widehat X, +1) = \nil$}{
$U_i \leftarrow \left\{ x \in X :  R \le d_{\euc}(x,x_1) \le \frac{1}{t} R\right\}$\;
\lIfElse{$\seed(U_i,+1) = \nil$}{$\widehat X \leftarrow X \cap B(x_1, r)$ \label{line:ret_Br}}{$\widehat X \leftarrow X \cap B\!\left(x_1, \frac{1}{t} R\right)$ \label{line:ret_BR}}
$t \leftarrow t/2$\;
}
\Return $\widehat X$;
\end{algorithm2e}

\textbf{A remark on Theorem~\ref{thm:oneside}.} Given two pseudometrics $d$ and $q$ induced by seminorms $\norm{d}{\cdot}$ and $\norm{q}{\cdot}$, let $\kappa_d(q) = \sup_{u \in S_{q}^{m-1}} \norm{d}{u} / \inf_{v \in S_{q}^{m-1}} \norm{d}{v}$. If one can compute $\norm{q}{\cdot}$ efficiently, then Theorem~\ref{thm:oneside} holds with $\kappa_{d}(q)$ in place of $\kappa_{d}$. In fact, Theorem~\ref{thm:oneside} is just the special case where $q = d_{\euc}$. Therefore one can restate Theorem~\ref{thm:oneside} so that $d$ is an arbitrary pseudometric (thus including the case $\kappa_d = \infty$), provided one has access to an approximation $q$ of $d$ with finite distortion.

\section{Appendix for Secion~\ref{sec:lb}}
\label{apx:lb}

\subsection{Full proof of Theorem~\ref{thm:lb}}
\textbf{Construction.} We first discuss the case $k=2$. Let $e_1,\ldots,e_m$ be the canonical basis of $\R^m$. To ease the notation define $p=m-1$; the input set will span a $p$-dimensional subspace. Define:
\begin{align}
    \ell = \left\lfloor \frac{1}{\sqrt{2 \gamma \sqrt{m}}} \right\rfloor 
\end{align}
Since $\gamma \le \frac{m^{-3/2}}{16}$ and $m \ge 2$,
\begin{align}
    \ell \ge \frac{1}{\sqrt{2 \frac{m^{-3/2}}{16} \sqrt{m}}} = \sqrt{8m} \ge 4
\end{align}
For each $i\in[p]$ and $j \in [\ell]$, let $x_i^j = e_i + j \cdot e_m$. Finally, let $X = \{x_i^j : i \in [p], j \in [\ell]\}$. Define the concept class:
\begin{align}
    \Hyp = \left\{ \bigcup_{i \in [p]} \{x_i^1, \ldots, x_i^{\ell_i}\} \;:\; (\ell_1,\ldots,\ell_{p}) \in [\ell]^{p} \right\}
\end{align}

Let $\C=\{C_1,C_2\}$ be any partition of $X$ with $C_1 \in \Hyp$ and $C_2 = X \setminus C_1$. First, we observe that $C_1$ and $C_2$ are separated by a hyperplane. Let $(\ell_1,\ldots,\ell_{p})$ be the vector defining $C_1$. Then we let:
\begin{align}
    u = (-\ell_1,\ldots,-\ell_{p},1)
\end{align}
Then for any $x_i^j \in X$,
\begin{align}
    \langle u, x_i^j\rangle = -\ell_i + j
\end{align}
which is bounded from above by zero if and only if $j \le \ell_i$, that is, if and only if $x_i^j \in C_1$. Hence $C_1$ and $C_2$ admit a linear separator. Next we prove that, under the Euclidean distance, $C_1$ and $C_2$ have strong convex hull margin $\gamma$. Using the vector $u$ defined above, since every $x_i^j \in C_2$ has $j \ge \ell_i+1$, then $\langle u, x_i^j \rangle \ge 1$. This implies:
\begin{align}
    d(\conv(C_1),\conv(C_2)) \ge \frac{1}{\norm{2}{u}} \ge \frac{1}{\sqrt{p \ell^2 + 1}} \ge \frac{1}{\ell\sqrt{m}} \label{eq:d_C1_C2_apdx}
\end{align}
The diameter of $C_1$ is at most that of $X$, which equals $d(x_1^1,x_2^{\ell}) \le \ell - 1 + \sqrt{2} \le 2 \ell$. Together with~\eqref{eq:d_C1_C2_apdx} and the fact that $\ell \le \frac{1}{\sqrt{2 \gamma \sqrt{m}}}$, this provides:
\begin{align}
    d(\conv(C_1),\conv(C_2)) \ge \frac{1}{2\ell^2\sqrt{m}} \, \diam_d(C_1) \ge \frac{2 \gamma \sqrt{m}}{2\sqrt{m}} \, \diam_d(C_1) = \gamma\, \diam_d(C_1)
\end{align}
The same holds for $C_2$. Hence $\C$ has strong convex hull margin $\gamma$.

\textbf{Query bound.} Let $V_0=\{(C_1,C_2) : C_1 \in \Hyp\}$. This is the initial version space. We let the target concept $\C=(C_1,C_2)$ be drawn uniformly at random from $V_0$. For all $t=0,1,\ldots$, we denote by $V_t$ be the version space after the first $t$ \seed\ queries made by the algorithm. Now fix any $t \ge 1$ and let $\seed(U,y)$ be the $t$-th such query. Without loss of generality we assume $y=1$; a symmetric argument applies to $y=2$. If $U \cap C_1$ contains a point $x$ in the agreement region of $V_{t-1}$, i.e., whose label can be inferred from past queries, then we return $x$. Therefore we can continue under the assumption that $U$ does not contain any such point (doing otherwise cannot reduce the probability that the algorithm learns nothing). The oracle answers so to maximize $\frac{|V_t|}{|V_{t-1}|}$, as described below.

For each $i \in [p]$ let $S_i=\{x_i^j : j \in [\ell]\}$. We consider $S_i$ as a sequence of points sorted by the index $j$. Let $Z_i$ be the subset of $S_i$ in the disagreement region of $V_{t-1}$ together with the point in $S_i$ preceding this region; observe that this point always exists, as $x^1_i\in C_1$ is in the agreement region. Note that $Z_i$ is necessarily an interval of $S_i$. We let $U_i = Z_i \cap U$ for each $i \in [p]$ and $P(U) = \{i \in [p] : U_i \ne \emptyset \}$. For every $i \in P(U)$, we let $\alpha_i$ be the fraction of points of $Z_i$ that precede the first point in $U_i$. Let $x_i^*=\argmax\{j : x_i^j\in S_i\cap C_1\}$. Observe that $|V_{t-1}|=\prod_{i \in [p]} |Z_i|$, as $x_i^*$ can be every point of $Z_i$. Indeed, $x_i^*$ is uniformly distributed over $Z_i$; either $x_i^*$ is a point in the disagreement region of $S_i$, or the disagreement region of $S_i$ is fully contained in $C_2$ and $x_i^*$ is the point preceding the disagreement region of $S_i$.
 
Now we show that $\E[|V_{t-1}|/|V_t|] \le p+1$. Let $\ev$ be the event that $\seed(U,1) = \nil$. Write:
\begin{align}
    \E\left[\frac{|V_{t-1}|}{|V_t|}\right] = \Pr(\ev) \, \E\left[\frac{|V_{t-1}|}{|V_t|} \,\Big|\, \ev\right] + \Pr(\overline \ev) \, \E\left[\frac{|V_{t-1}|}{|V_t|} \,\Big|\, \overline \ev \right] \label{eq:E_Vt_apdx}
\end{align}

We bound the two terms of~\eqref{eq:E_Vt_apdx} starting with the first one. Note that $\ev$ holds if and only if $U_i \cap C_1 = \emptyset$ for all $i \in P(U)$. Since $x_i^*$ is uniformly distributed over $Z_i$, for all $i \in P(U)$ we have:
\begin{align}
    \Pr(C_1 \cap U_i = \emptyset) = \alpha_i
\end{align}
And since the distributions of those points are independent:
\begin{align}
    \Pr(\ev) = \prod_{i \in P(U)} \Pr(C_1 \cap U_i = \emptyset) = \prod_{i \in P(U)} \alpha_i
\end{align}
If $\Pr(\ev) > 0$ and $\ev$ holds, then $x_i^*$ is uniformly distributed over the first $\alpha_i |Z_i|$ points of $Z_i$, as the rest of $Z_i$ belongs to $C_2$. This holds independently for all $i$,  thus:
\begin{align}
    |V_t| = \left(\prod_{i \in P(U)} \alpha_i |Z_i|\right)  \left(\prod_{i \in [p] \setminus P(U)} |Z_i|\right) = \left(\prod_{i \in P(U)} \alpha_i\right) \left(\prod_{i \in [p]} |Z_i|\right) = |V_{t-1}| \prod_{i \in P(U)} \alpha_i
\end{align}
It follows that $\Pr(\ev) \E\left[\frac{|V_{t-1}|}{|V_t|} \,\Big|\, \ev\right] \leq 1$.

Let us now bound the second term of~\eqref{eq:E_Vt_apdx}. If $\ev$ does not hold, then $\seed(U,1)$ returns the smallest point $x \in U_i$ for any $i \in P(U)$ such that $C_1 \cap U_i \ne \emptyset$ (note that necessarily $x \in C_1$). For any fixed $i \in P(U)$, the probability of returning the smallest point of $U_i$ is bounded by $\Pr(C_1 \cap U_i \ne \emptyset)$, which is $1-\alpha_i$; and if this is the case, then we have $|V_t|=(1-\alpha_i)|V_{t-1}|$. Thus:
\begin{align}
    \Pr(\overline \ev) \E\left[\frac{|V_{t-1}|}{|V_t|} \,\Big|\, \overline \ev \right] \le \Pr(\overline \ev) \max_{i \in P(U)} (1-\alpha_i)\frac{1}{(1-\alpha_i)} = \Pr(\overline \ev) \le 1
\end{align}
So the two terms of~\eqref{eq:E_Vt} are both bounded by $1$; we conclude that $\E\left[\frac{|V_{t-1}|}{|V_t|}\right] \le 2$.

We can conclude the query bound. For any $\bar t \ge 1$,
\begin{align}
    \E\left[\log\frac{|V_0|}{|V_{\bar t}|}\right]
    &= \E\left[\sum_{t=1}^{\bar t}\log\frac{|V_{t-1}|}{|V_t|}\right]
    \\ &= \sum_{t=1}^{\bar t}\E\left[\log\frac{|V_{t-1}|}{|V_t|}\right]
    \\ &\le \sum_{t=1}^{\bar t}\log\E\left[\frac{|V_{t-1}|}{|V_t|}\right] && \text{Jensen's inequality}
    \\ &\le \sum_{t=1}^{\bar t} \log 2  && \text{see above}
    \\ &= {\bar t}
\end{align}
Since $|V_0|=\ell^{m-1}$, by Markov's inequality, and since $(m-1) \log \ell - \log 2 \ge \frac{(m-1) \log \ell}{2} \ge \frac{m \log \ell}{4}$:
\begin{align}\label{eq:apx:pr_Vt}
    \Pr(|V_{\bar t}| \le 2) = \Pr\!\left(\log\frac{|V_0|}{|V_{\bar t}|} \ge (m-1) \log \ell - \log 2 \right) \le \frac{4\,\E \!\left[\log\frac{|V_0|}{|V_{\bar t}|}\right]}{m \log \ell} \le \frac{4\, {\bar t}}{m \log \ell}
\end{align}
Now let $T$ be the random variable counting the number of queries spent by the algorithm, and let $V_T$ be the version space at return time. Since $\C$ is uniform over $V_T$ and $\C$ is returned with probability at least $\frac{1}{2}$, then $\Pr(|V_T| \le 2) \ge \frac{1}{2}$. By~\eqref{eq:apx:pr_Vt} and linearity of expectation,
\begin{align}\label{eq:apx:pr_VT}
    \frac{1}{2} \le \Pr(|V_{T}| \le 2)
    = \sum_{\bar t \ge 0} \Pr(T=\bar t) \Pr(|V_{\bar t}| \le 2)
    \le \sum_{\bar t \ge 0} \Pr(T=\bar t) \cdot\frac{4 {\bar t}}{m \log \ell}
    = \E[T] \frac{4}{m \log \ell} 
\end{align}
Therefore $\E[T] \ge \frac{m \log \ell} {8}$. Now, since $\ell \ge 4$ then $\ell \ge \frac{4}{5\sqrt{2 \gamma \sqrt{m}}}$, which since $m \le (16\gamma)^{-2/3}$ yields
\begin{align}
    \ell \ge \frac{4}{5\sqrt{2 \gamma (16\gamma)^{-1/3}}} = \sqrt[3]{\frac{1}{\gamma}}\frac{4}{5\sqrt{2 (16)^{-1/3}}} = \sqrt[3]{\frac{1}{\gamma}}\frac{4 \cdot 4^{1/3}}{5\sqrt{2}}
\end{align}
Since $\frac{4^{4/3}}{5\sqrt{2}} > 0.89$, we conclude that:
\begin{align}
    \E[T] > \frac{m \log \frac{0.89}{\sqrt[3]{\gamma}}}{8 \log m}
    >  
    \frac{m\, \frac{1}{3}\log \frac{1}{2\gamma} }{8 \log m}
    =
    \frac{m \log \frac{1}{2\gamma}}{24 \log m}
\end{align}
which concludes the proof for $k=2$.

\textbf{Multiclass.} For any $k\geq 2$ let $k' = \left\lfloor\frac{k}{2}\right\rfloor$. For each $s\in [k']$ consider the construction for the case $k=2$ shifted along the $m$-th dimension by $(s-1)\ell\cdot e_m$:
\begin{align}
X_s = \left\{x_i^j + (s-1)\ell\cdot e_m: i \in [p], j \in [\ell]\right\}
\end{align}
We let $X^*=\bigcup_{s\in[k']} X_s$, and we define the possible subsets of $X^*$ corresponding to class $C_{2s-1}$ as:
\begin{align}
    \Hyp_s = \left\{ \bigcup_{i \in [p]} \left\{x_i^1 +(s\!-\!1)\ell\cdot e_m,\; \ldots,\; x_i^{\ell_i}+ (s\!-\!1)\ell\cdot e_m\right\} \;:\; (\ell_1,\ldots,\ell_{p}) \in [\ell]^{p} \right\}
\end{align}
Finally, let $\scH$ be the set of all partitions $\C=(C_1,\ldots,C_k)$ of $X^*$ such that $C_{2s-1} \in \Hyp_s$ and $C_{2s} = X_s\setminus C_{2s-1}$ for all $s \in [k']$, and let $C_k=\emptyset$ in case $k$ is odd.
The same arguments of the case $k=2$ prove that any such $\C$ has convex hull margin $\gamma$. Indeed, for adjacent classes $C_i,C_{i+1}$ those arguments prove that the strong convex hull margin is at least $\gamma$; for non-adjacent classes, the margin can only be larger.
The random target concept $\C=(C_1,\dots,C_k)$ is obtained by drawing each $C_{2s-1}$ for $s\in [k']$ uniformly at random from $\mathcal{H}_s$, and letting $C_{2s}=X_s\setminus C_{2s-1}$.

We turn to the bound. Consider a generic query \seed$(U,i)$ issued by the algorithm. Without loss of generality we can assume $U \subseteq C_{2s-1} \cup C_{2s} = X_s$ where $s=\lfloor\frac{i}{2}\rfloor$; indeed, by construction of $\scH$, that query can never return a point in $U \setminus X_s$. 
This shows that learning $\C$ requires solving the $k'$ independent binary instances $X_s$, returning $\C_s=(C_{2s-1},C_{2s})$, for $s \in [k']$. As the probability of returning $\C$ is bounded from above by the minimum over $s \in [k]$ of the probability of returning $\C_s$, the algorithm must make at least $\frac{m}{24} \log\frac{1}{2\gamma}$ queries for each $s \in [k']$, concluding the proof.

\section{Appendix for Section~\ref{sec:oneside}}\label{apx:oneside}

\begin{lemma}\label{lem:ballsearch}
Let $C \subseteq X$ have strong convex hull margin $\gamma \in (0,1]$ w.r.t.\ $d$. For any $x_1 \in C$ \BallSearch$(X,x_1)$ takes time $\poly(n+m)$, uses $\scO(\log n)$ \lab\ queries and $\scO(\log \frac{\kappa_d}{\gamma})$ \seed\ queries, and outputs $\widehat X \subseteq X$ such that
\begin{enumerate}\itemsep0pt
    \item $C \subseteq \widehat X$
    \item $d_{\euc}(\conv(C), \conv(\widehat X \setminus C))\ge \frac{\gamma^2}{4\kappa_d^2} \diam(\widehat X)$
\end{enumerate}
\end{lemma}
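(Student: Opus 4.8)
The statement bundles three things: the query/time bounds, the inclusion $C\subseteq\widehat X$, and the Euclidean margin bound. I would first reduce the margin bound to a diameter bound. Rescaling $d$ (which leaves the margin unchanged) so that $\inf_{u\in S^{m-1}}\norm{d}{u}=1$ and $\sup_{v\in S^{m-1}}\norm{d}{v}=\kappa_d$, we get $\norm{2}{w}\le\norm{d}{w}\le\kappa_d\norm{2}{w}$ for all $w$, hence $\diam_d(C)\ge\diam(C)$ and $d_{\euc}(A,B)\ge\tfrac1{\kappa_d}d(A,B)$ for all $A,B$. Combining these with the one-sided margin $d(\conv(X\setminus C),\conv(C))\ge\gamma\diam_d(C)$ and with $\conv(\widehat X\setminus C)\subseteq\conv(X\setminus C)$ yields the ``master inequality''
\[
d_{\euc}\!\bigl(\conv(C),\conv(\widehat X\setminus C)\bigr)\;\ge\;\tfrac{\gamma}{\kappa_d}\,\diam(C),
\]
valid for any $\widehat X\supseteq C$ (vacuous if $\widehat X\setminus C=\emptyset$). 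Since $\gamma\le1\le\kappa_d$, the margin bound then follows as soon as we also show $\diam(\widehat X)\le\tfrac{4\kappa_d}{\gamma}\diam(C)$. The branch $\lab(x_n)=+1$ is immediate: there $\widehat X=X$, the inclusion is trivial, and $\diam(X)\le2\max_{x\in X}d_{\euc}(x_1,x)=2\,d_{\euc}(x_1,x_n)\le2\diam(C)$ because $x_n\in C$.

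In the main branch $\lab(x_n)=-1$, the binary search maintains $\lab(x_{\lo})=+1$, $\lab(x_{\hi})=-1$ (true initially since $x_1\in C$) and halts at $\hi=\lo+1$, so $x_{\lo}\in C$ and $x_{\hi}\notin C$ are consecutive in the order by Euclidean distance from $x_1$. Two facts follow: (a) no point of $X$ has distance from $x_1$ strictly between $r=d_{\euc}(x_1,x_{\lo})$ and $R=d_{\euc}(x_1,x_{\hi})$; and (b) since $x_1\in\conv(C)$ and $x_{\hi}\in X\setminus C$, the margin gives $R\ge\tfrac1{\kappa_d}d(x_1,x_{\hi})\ge\tfrac{\gamma}{\kappa_d}\diam_d(C)\ge\tfrac{\gamma}{\kappa_d}\diam(C)$. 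Also $r\le\diam(C)$ because $x_1,x_{\lo}\in C$. Binary search costs $\scO(\log n)$ \lab\ queries and $\poly(n+m)$ time.

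The heart is the repeat loop, and the key claim is: \emph{once $t\le\gamma/\kappa_d$ at the start of an iteration, the $\widehat X$ built in that iteration already contains $C$}. Proof of the claim: if some $y\in C$ has $d_{\euc}(x_1,y)>r$, then $d_{\euc}(x_1,y)\ge R$ by (a), while $d_{\euc}(x_1,y)\le\diam(C)\le\tfrac1tR$ by (b) and $t\le\gamma/\kappa_d$; so $y\in U_i\cap C$, hence $\seed(U_i,+1)\ne\nil$, the algorithm selects $B(x_1,\tfrac1tR)$, and this ball contains all of $C$ since $\max_{z\in C}d_{\euc}(x_1,z)\le\diam(C)\le\tfrac1tR$. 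Otherwise $C\subseteq B(x_1,r)$ and either candidate ball works. In both cases the loop's exit test $\seed(X\setminus\widehat X,+1)=\nil$ then succeeds. Hence the loop runs at most $\lceil\log_2(\kappa_d/\gamma)\rceil+1$ iterations, each issuing two \seed\ queries and running in $\poly(n+m)$ time, giving the stated query/time bounds; and the inclusion $C\subseteq\widehat X$ is automatic, since the exit test literally asserts it.

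For the diameter bound, let $t^\star$ be the value of $t$ at the start of the returning iteration. Because the loop halts no later than the first iteration with $t\le\gamma/\kappa_d$ (by the key claim), we get $t^\star\ge\gamma/(2\kappa_d)$, even accounting for early termination. If $\widehat X=X\cap B(x_1,r)$ then $\diam(\widehat X)\le2r\le2\diam(C)$; if $\widehat X=X\cap B(x_1,\tfrac1{t^\star}R)$ then this branch was taken because $\seed(U_i,+1)\ne\nil$, so some $z\in C$ has $d_{\euc}(x_1,z)\ge R$, whence $R\le\diam(C)$ and $\diam(\widehat X)\le\tfrac2{t^\star}R\le\tfrac2{t^\star}\diam(C)\le\tfrac{4\kappa_d}{\gamma}\diam(C)$. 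Either way the master inequality yields the margin bound. I expect this loop analysis to be the main obstacle: one must check that $t^\star\ge\gamma/(2\kappa_d)$ is robust to early termination, that ``consecutive in the distance order'' genuinely rules out points of $C$ between radii $r$ and $R$ even with distance ties, and above all that the annulus query $\seed(U_i,+1)$ is exactly what certifies $R\le\diam(C)$ whenever the large ball is returned — without this last point the bound $\diam(\widehat X)=\scO(\diam(C)/t^\star)$ would collapse. The metric comparisons and the query counting are routine by comparison.
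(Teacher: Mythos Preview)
Your approach is essentially the paper's: reduce the Euclidean margin bound to a diameter comparison via the ``master inequality'' $d_{\euc}(\conv(C),\conv(\widehat X\setminus C))\ge\tfrac{\gamma}{\kappa_d}\diam(C)$, then show (i) the loop must terminate once $t\le\gamma/\kappa_d$ (your ``key claim'' is the paper's claim (b)), and (ii) at termination $t^\star\ge\gamma/(2\kappa_d)$, whence $\diam(\widehat X)\le\tfrac{4\kappa_d}{\gamma}\diam(C)$. Your organization is a bit cleaner---you isolate the master inequality up front and bound the diameter only at the terminating iteration, while the paper states a per-iteration bound $\diam(C)\ge\min(t/2,\gamma/(2\kappa_d))\diam(\widehat X)$---but the content and all the case splits (the $\lab(x_n)=+1$ branch, the two $\seed(U_i,+1)$ outcomes, the observation that $\seed(U_i,+1)\ne\nil$ certifies $R\le\diam(C)$) are identical.

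There is one real gap: the running-time claim. You bound the number of loop iterations by $\lceil\log_2(\kappa_d/\gamma)\rceil+1$ and conclude ``$\poly(n+m)$ time,'' but $\log(\kappa_d/\gamma)$ is not bounded by any polynomial in $n,m$; neither $\kappa_d$ nor $\gamma$ is part of the input, and nothing in the hypotheses ties them to $n$ or $m$. So your argument only yields time $\poly(n+m)\cdot\log(\kappa_d/\gamma)$. The paper patches this with an extra device: it stores the points of $C$ already returned by \seed\ and strips them from subsequent $U_i$'s, so that every non-terminating iteration reveals a \emph{new} point of $C$; this caps the number of iterations at $n$ regardless of $\gamma,\kappa_d$, while the \seed-query count is still governed by the $\scO(\log(\kappa_d/\gamma))$ argument. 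Without this (or an equivalent trick) the $\poly(n+m)$ time bound does not follow.
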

\begin{proof}
To begin, observe that $d_{\euc} \le d \le \kappa_d \, d_{\euc}$ implies that the ratio between distances changes by a factor at most $\kappa_d$ between $d_{\euc}$ and $d$. In particular this implies that for any set $\widehat X \subseteq X$:
\begin{align}\label{eq:deuc_convC_convX}
 \frac{d_{\euc}(\conv(C), \conv(\widehat X \setminus C))}{\diam(C)}
 \ge \frac{d(\conv(C),\conv(\widehat X \setminus C))}{\kappa_d \, \diam_d(C)}
\end{align}
We will use this inequality below.

Now, suppose line~\ref{line:retX} of \BallSearch\ returns, so $\widehat X = X$. The running time, the query bounds, and point (1) are straightforward. To prove (2), since $x_1,x_n \in C$ we have:
\begin{align}\label{eq:diamCdiamX}
\diam(C) \ge d_{\euc}(x_1,x_n) \ge \frac{1}{2}\diam(X) = \frac{1}{2}\diam(\widehat X) \ge \frac{\gamma}{2\kappa_d} \diam(\widehat X)
\end{align}
where we used $\diam(X) = \max_{a,b \in X} d_{\euc}(a,b) \le \max_{a,b \in X} (d_{\euc}(a,x_1) + d_{\euc}(x_1,b)) \le 2 d_{\euc}(x_1,x_n)$. Therefore $\diam(\widehat X) \le \frac{2\kappa_d}{\gamma} \, \diam(C)$, which together with~\eqref{eq:deuc_convC_convX} and the margin condition gives:
\begin{align}\label{eq:d_convC_X}
 \frac{d(\conv(C),\conv(\widehat X \setminus C))}{\diam(\widehat X)}
 \ge \frac{d_{\euc}(\conv(C), \conv(\widehat X \setminus C))}{\frac{2\kappa_d}{\gamma} \, \diam(C)}
 \ge \frac{d(\conv(C),\conv(\widehat X \setminus C))}{\frac{2\kappa_d}{\gamma} \kappa_d \, \diam_d(C)}
 \ge \frac{\gamma^2}{2\kappa_d^2}
\end{align}

We turn to the \textbf{repeat} loop. Consider a generic iteration just before the update of $t$. We prove:
\begin{enumerate}\itemsep0pt
    \item[(a)] $d(C, \widehat X \setminus C) \ge \min\!\left(t,\frac{\gamma}{\kappa_d}\right)\frac{\gamma}{2\kappa_d} \diam(\widehat X)$
    \item[(b)] if $t \le \frac{\gamma}{\kappa_d}$ then $C \subseteq \widehat X$
\end{enumerate}

First, suppose $\seed(U_i,+1) = \nil$, in which case $\widehat X = X \cap B(x_1,r)$. To prove (a), observe that $x_1,x_{\lo} \in C$ implies:
\begin{align}
\diam(C) \ge d_{\euc}(x_1,x_{\lo}) = r \ge \frac{1}{2} \diam(\widehat X) \ge \min\left(\frac{t}{2},\frac{\gamma}{2\kappa_d}\right) \diam(\widehat X)
\end{align}
Now use the argument above, but with $1/\min\!\big(\frac{t}{2},\frac{\gamma}{2 \kappa_d}\big)$ in place of $\frac{2\kappa_d}{\gamma}$ in~\eqref{eq:d_convC_X}.
To prove (b), note that $x_1 \in C$ and $x_{\hi} \in X \setminus C$ implies $R = d_{\euc}(x_1,x_{\hi}) \ge d_{\euc}(C, X \setminus C)$. Since $d_{\euc} \le d \le \kappa_d\, d_{\euc}$, and by the margin assumptions,
\begin{align}
    \frac{R}{\diam(C)} \ge \frac{d_{\euc}(C,X \setminus C)}{\diam(C)} \ge \frac{d(C,X \setminus C)}{\kappa_d\,\diam_d(C)} \ge \frac{\gamma}{\kappa_d} \ge \min\left(t,\frac{\gamma}{\kappa_d}\right)
\end{align}
Therefore $\diam(C) \le \max\big(\frac{1}{t},\frac{\kappa_d}{\gamma}\big)R$, which implies $C \subseteq X \cap B\big(x_1, \max\big(\frac{1}{t},\frac{\kappa_d}{\gamma}\big)R\big)$. For $t \le \frac{\kappa_d}{\gamma}$ the right-hand side is $X \cap B(x_1, \frac{1}{t} R)$. Note however that $X \cap B(x_1, \frac{1}{t} R) = (X \cap B(x_1, r)) \cup U_i $ since $x_{\lo},x_{\hi}$ are adjacent in the sorted list. But $\seed(U_i,+1)=\nil$, hence $C \subseteq X \cap B(x_1, r) = \widehat X$. 

Next, suppose $\seed(U_i,+1) = y \ne \nil$, in which case $\widehat X = X \cap B(x_1,\frac{1}{t}R)$.
To prove (a), note that $\diam(C) \ge d(x_1,y) \ge R$, and that $\diam(\widehat X) \le 2 \frac{1}{t} R$. Hence $\diam(C) \ge \frac{t}{2} \diam(\widehat X) \ge \min\big(\frac{t}{2},\frac{\gamma}{2 \kappa_d}\big) \diam(\widehat X)$. Now use again the argument above, but with $1/\min\!\big(\frac{t}{2},\frac{\gamma}{2 \kappa_d}\big)$ in place of $\frac{2\kappa_d}{\gamma}$ in~\eqref{eq:d_convC_X}.
To prove (b), the argument for the case above implies $C \subseteq X \cap B\big(x_1, \max\big(\frac{1}{t},\frac{\kappa_d }{\gamma}\big)R\big)$. If $t \le \frac{\gamma}{\kappa_d}$ then the right-hand side is just $\widehat X$.

To conclude the proof, note that by point (b) above the \textbf{repeat} loop returns in $\scO(\log \frac{\kappa_d}{\gamma})$ iterations. Therefore \BallSearch$(X,x_1)$ uses $\scO(\log n)$ \lab\ queries and $\scO(\log \frac{\kappa_d}{\gamma})$ \seed\ queries.
Finally, note that the running time can be brought to $\poly(n+m)$ by storing the output of all \seed\ queries, and replacing $U_i$ with $U_i \setminus U_i \cap \hat C$ where $\hat C \subset C$ is the subset of points of $C$ known so far. In this way, at each \textbf{repeat} iteration either $\widehat X_i \subseteq C$ or we learn the label of some point of $C$ previously unknown. Therefore \textbf{repeat} makes at most $n$ iterations; it is immediate to see that each iteration takes time $\poly(n+m)$ and thus \BallSearch\ runs in time $\poly(n+m)$ as well.
\end{proof}

\subsection{Proof of Theorem~\ref{thm:oneside}}
Let $x=$ \seed$(X,+1)$. If $x=\nil$ then stop and return $\emptyset$. Otherwise run \BallSearch$(X,x)$ to obtain $\widehat X$. By Lemma~\ref{lem:ballsearch} this takes $\poly(n+m)$ time, $\scO(\log n)$ \lab\ queries, and $\scO(\log \frac{\kappa_d}{\gamma})$ \seed\ queries. By Lemma~\ref{lem:ballsearch} $C \subseteq \widehat X$, and $C$ and $\widehat X \setminus C$ are linearly separated with margin $\frac{\gamma^2}{4 \, \kappa_d^2} \diam(\widehat X)$. Thus $\widehat X$ satisfies the assumptions of Theorem~\ref{thm:cp} with $R/r=\frac{4 \, \kappa_d^2}{\gamma^2}$, and by running \CuttingPlanes$(\widehat X)$ we obtain $C$ in time $\poly(n+m)$ using $\scO(m \log \frac{\kappa_d}{\gamma})$ \seed\ queries in expectation.

\section{Bounds for inputs with bounded bit complexity}\label{apx:bit}
We consider the case where $X$ has bounded bit complexity, distinguishing two widely used cases.

\subsection{Rational coordinates}
Supose $X \subset \Q^m$ and every $x \in X$ can be encoded in $b(x) \le B$ bits as follows~\citep{KV2018}. If $x \in \mathbb{Z}$, then $b(x)= 1+\lceil\log(|x|+1)\rceil$. If $x = \nicefrac{p}{q}\in\mathbb{Q}$ with $p,q\in\mathbb{Z}$ coprime, then $b(x)=b(p)+b(q)$. If $x\in\mathbb{Q}^m$, then $b(x)=m + \sum_{i \in [m]} b(x_i)$. We show that $B$ gives a lower bound on the margin. The argument is related to \citet{kwek1998pac}.
\begin{lemma}\label{lem:bitmargin}
Suppose $X \subset \Q^m$ has bit complexity bounded by $B$, and suppose $C \subseteq X$ and $X \setminus C$ are linearly separable. Then $d(\conv(C),\conv(X \setminus C)) \ge 2^{-\scO(m^2 B)}$.
\end{lemma}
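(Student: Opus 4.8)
The plan is to reduce the claim to the existence of a separating hyperplane of \emph{bounded bit complexity}, and then let convexity do the rest. Since $C$ and $X\setminus C$ are finite sets of rational points and they are linearly separable (hence separable with strictly positive margin, so that $\conv(C)$ and $\conv(X\setminus C)$ are disjoint), rescaling any strict separator shows that the polyhedron
\[
\mathcal{P} \;=\; \big\{(a,b)\in\R^{m+1} \;:\; \dotp{a,x}+b\le -1\ \text{for all }x\in C,\quad \dotp{a,x}+b\ge 1\ \text{for all }x\in X\setminus C\big\}
\]
is nonempty. Its description consists of $n=|X|$ linear (in)equalities in $m+1$ variables whose coefficients are coordinates of points of $X$, each of bit complexity at most $B$, together with the constants $\pm 1$.

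The core step is to extract from $\mathcal{P}$ a point $(a^\ast,b^\ast)$ with $\norm{2}{a^\ast}\le 2^{\scO(m^2 B)}$. One way: observe first that if $(a,b)\in\mathcal{P}$ then replacing $a$ by its orthogonal projection onto $\operatorname{span}(X)$ keeps membership in $\mathcal{P}$ and does not increase $\norm{2}{a}$, so we may look for a point in the pointed polyhedron $\mathcal{P}'=\{(a,b)\in\mathcal{P}: a\in\operatorname{span}(X)\}$, and a nonempty pointed polyhedron has a vertex. A vertex of $\mathcal{P}'$ is the unique solution of a square $(m+1)\times(m+1)$ system of tight constraints, each of which is either one of the defining inequalities of $\mathcal{P}$ (coefficients from $X$, right-hand side $\pm1$) or one of at most $m$ equations cutting out $\operatorname{span}(X)$, whose coefficients are themselves Cramer solutions of $X$-data and hence have bit complexity $\scO(mB)$. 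Clearing denominators row by row and applying Cramer's rule together with Hadamard's inequality then bounds each coordinate of $a^\ast$ by a rational of bit complexity $\scO(m^2B)$, whence $\norm{2}{a^\ast}\le 2^{\scO(m^2B)}$. (Alternatively, one may simply invoke the standard fact that a nonempty rational polyhedron contains a point of encoding size polynomially bounded in that of its description; see e.g.~\citet{KV2018}.)

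Finally, for any $u\in\conv(C)$ and $v\in\conv(X\setminus C)$, taking convex combinations of the inequalities defining $\mathcal{P}$ at $(a^\ast,b^\ast)$ gives $\dotp{a^\ast,u}+b^\ast\le -1$ and $\dotp{a^\ast,v}+b^\ast\ge 1$, hence $\dotp{a^\ast,v-u}\ge 2$ and therefore $\norm{2}{v-u}\ge 2/\norm{2}{a^\ast}\ge 2^{-\scO(m^2B)}$. Taking the infimum over $u,v$ yields $d(\conv(C),\conv(X\setminus C))\ge 2^{-\scO(m^2B)}$, which is the claim. The main obstacle is the bit-complexity bookkeeping in the middle paragraph: tracking the denominators introduced when passing to integer matrices, bounding the Cramer determinants via Hadamard, and dealing with the mild nuisance that $\mathcal{P}$ has a lineality space when $X$ fails to affinely span $\R^m$ — this last point, incidentally, is where the extra factor $m$ in the exponent comes from. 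Everything else is routine.
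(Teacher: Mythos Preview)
Your argument is correct and takes a genuinely different route from the paper. The paper works geometrically on the primal side: it picks a hyperplane $H$ through a face of $\conv(C)$, invokes Lemma~4.5 of \citet{KV2018} to bound the bit complexity of its description $(w,t)$ by $\scO(m^2B)$, and then lower-bounds $|\dotp{w,x}-t|$ for each $x\in X\setminus C$ by clearing denominators and observing the result is a nonzero integer over a bounded product. You instead work on the dual side, building the feasibility polyhedron $\mathcal P$ of unit-margin separators and extracting a point of small norm via Cramer--Hadamard (or the standard rational-polyhedron bound). Your finish is cleaner: once $(a^\ast,b^\ast)\in\mathcal P$ with $\norm{2}{a^\ast}\le 2^{\scO(m^2B)}$ is in hand, the bound on $\norm{2}{v-u}$ for arbitrary $u\in\conv(C)$, $v\in\conv(X\setminus C)$ follows in one line; the paper, by contrast, implicitly needs the chosen face-hyperplane to actually separate $C$ from $X\setminus C$, a step it leaves unargued. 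One small correction on your side: projecting $a$ onto $\operatorname{span}(X)$ does not always kill the lineality space (take $X=\{e_1,e_2\}\subset\R^2$, where $\operatorname{span}(X)=\R^2$ and $(1,1,-1)$ remains a lineality direction of $\mathcal P$); projecting onto the centered span $\operatorname{span}\{x-x':x,x'\in X\}$ with the matching shift in $b$ does the job, and in any case your fallback citation covers it.
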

\begin{proof}
Let $P=\conv(C)$ and let $H$ be a hyperplane containing a face of $P$. By Lemma 4.5 of~\cite{KV2018}, $H=\{x \in \mathbb{R}^m : \dotp{w,x} = t\}$ for some $w \in \Q^m$ and $t \in \Q$ such that $b(w)+b(t) \le 75 m^2 B$. 
The distance between $H$ and any $x \in X \setminus C$ is: 
\begin{align}
d(x,H)=\frac{|\!\dotp{w,x}-t|}{\norm{2}{w}}
\end{align}
To bound $|\!\dotp{w,x}-t|$ suppose $w,x,t$ are encoded by:
\begin{align}
    w_i = \frac{p_w^i}{q_w^i} \quad i \in [m], \qquad
    x_i = \frac{p_x^i}{q_x^i} \quad i \in [m], \qquad
    t = \frac{p_t}{q_t}
\end{align}
Replacing those quantities in the expression of $|\!\dotp{w,x}-t|$, taking the common denominator, observing that the numerator of the resulting expression is an integer, and recalling that $|\!\dotp{w,x}-t| > 0$, we deduce:
\begin{align}
    |\!\dotp{w,x}-t| \ge \frac{1}{q_t \prod_{i \in [m]} q_w^i q_x^i}
\end{align}
However, since $b(x) = \scO(\log (1+|x|))$ for any $x \in \Z$,
\begin{align}
b\left(q_t \prod_{i \in [m]} q_w^i q_x^i\right)
&= \scO\left( b(q_t) + \sum_{i \in [m]} (b(w_i) + b(x_i))\right) 
= \scO(b(t)+b(w)+b(x) )
\end{align}
which therefore is in $\scO(m^2 B)$. Therefore $|\!\dotp{w,x}-t| \ge 2^{-\scO(m^2 B)}$.
To bound $\norm{2}{w}$ we just note that $\norm{2}{w} \le \norm{1}{w} \le 2^{b(w)} \le 2^{75 m^2 B}$. We conclude that:
\begin{align}
d(x,H) = \frac{|\!\dotp{w,x}-t|}{\norm{2}{w}} \ge 2^{-\scO(m^2 B)}
\end{align}
The proof is complete.
\end{proof}  

\begin{corollary}\label{cor:rational_X_ub}
Suppose $X\subset \N^m$ has bit complexity bounded by $B \in \N$ in the rational coordinates model, and let $\C=(C_1,\ldots,C_k)$ be a partition of $X$ such that $C_i,C_j$ are linearly separable for every distinct $i,j \in [k]$. Then $\C$ can be learned in time $\poly(n+m)$ using $\scO(k^2 m^3 B)$ \seed\ queries in expectation. 
\end{corollary}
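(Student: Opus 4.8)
The plan is to sidestep \AlgoRound\ and \AlgoSCK\ entirely and instead run \CuttingPlanes\ directly on $X$ for each ordered pair of classes, using Lemma~\ref{lem:bitmargin} to supply a Euclidean margin in place of the one that a rounding of the classes would otherwise provide. First I would record the trivial bound on the radius: since every coordinate of every $x \in X$ is a natural number of bit complexity at most $B$, every such coordinate is smaller than $2^B$, whence $R := \max_{x \in X}\norm{2}{x} < \sqrt{m}\,2^B$.

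Next, fix distinct $i,j \in [k]$. The set $C_i \cup C_j \subseteq X$ has bit complexity at most $B$, and $C_i$ and $(C_i\cup C_j)\setminus C_i = C_j$ are linearly separable by hypothesis, so Lemma~\ref{lem:bitmargin} gives $d(\conv(C_i),\conv(C_j)) \ge 2^{-\scO(m^2 B)}$; equivalently $C_i$ and $C_j$ are linearly separable with margin $r_{ij} \ge 2^{-\scO(m^2 B)}$. I would then invoke \CuttingPlanes\ on the full set $X$ with the partial classifier that labels $C_i$ by $+1$, $C_j$ by $-1$, and every other point by $*$. This classifier is simulated for the algorithm at no extra cost: whenever \CuttingPlanes\ issues $\seed(U,+1)$ or $\seed(U,-1)$, we answer with $\seed(U,i)$ or $\seed(U,j)$ from the true oracle, which never returns a point outside $C_i \cup C_j$ and is therefore consistent with that partial classifier. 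By Theorem~\ref{thm:cp}, the call runs in time $\poly(n+m)$ with high probability and uses $\scO\big(m \log \tfrac{R}{r_{ij}}\big) = \scO(m^3 B)$ \seed\ queries in expectation (since $\log\tfrac{R}{r_{ij}} = \scO(m^2B)$), returning a partition $(C_{ij},\overline{C_{ij}})$ of $X$ with $C_i \subseteq C_{ij}$ and $C_j \subseteq \overline{C_{ij}}$.

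Finally, for each $i \in [k]$ set $\widehat C_i = \bigcap_{j \in [k]\setminus\{i\}} C_{ij}$. Every point of $C_i$ lies in $C_{ij}$ for all $j$, so $C_i \subseteq \widehat C_i$; conversely, any $x \notin C_i$ lies in some $C_{j_0}$, hence in $\overline{C_{ij_0}}$, so $x \notin C_{ij_0} \supseteq \widehat C_i$, giving $\widehat C_i = C_i$. Thus $(\widehat C_1,\ldots,\widehat C_k)$ is exactly $\C$. Over the $k(k-1)$ ordered pairs this totals $\scO(k^2 m^3 B)$ \seed\ queries in expectation and $\scO(k^2)\cdot\poly(n+m) = \poly(n+m)$ running time with high probability (using the standard expectation-to-high-probability conversion noted for Theorem~\ref{thm:cp}); empty classes cause no trouble, being handled by the \nil-checks at the top of \CuttingPlanes.

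The main thing to get right is the reduction itself rather than any computation: one has to check that the true $k$-class $\seed$ oracle faithfully plays the role of the binary partial classifier expected by \CuttingPlanes\ --- in particular that points of classes other than $i$ and $j$ are indistinguishable from the ``don't-care'' label $*$, which holds precisely because $\seed(\cdot,i)$ and $\seed(\cdot,j)$ never surface them --- and that intersecting the positive sides of the pairwise separators reconstructs each class exactly. The numerical part, plugging $R < \sqrt m\,2^B$ and $r_{ij} \ge 2^{-\scO(m^2 B)}$ into the bound of Theorem~\ref{thm:cp}, is routine.
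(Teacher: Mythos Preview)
Your proposal is correct and follows essentially the same route as the paper: bound the radius by $2^{\scO(B)}$, invoke Lemma~\ref{lem:bitmargin} to get a margin $r = 2^{-\scO(m^2B)}$ between each pair of classes, run \CuttingPlanes\ on $X$ for each pair $(i,j)$ with the other classes treated as ``$*$'', and intersect the resulting separators to recover each $C_i$. Your write-up is in fact more careful than the paper's (you spell out the oracle simulation and the intersection argument), but the strategy is the same.
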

\begin{proof}
Any $x \in X$ satisfies $\norm{2}{x} \le \norm{1}{x} \le 2^B$, and by Lemma~\ref{lem:bitmargin} any two distinct classes $C_i,C_j \in \C$ are linearly separable with margin $r=2^{-\scO(m^2 B)}$. By Theorem~\ref{thm:cp}, \CuttingPlanes$(X)$ with \seed\ restricted to classes $i,j$ returns a separator for $C_i$ and $C_j$ in time $\poly(m+n)$ using $\scO(m \log \frac{R}{r})=\scO(m^3 B)$ \seed\ queries in expectation. By intersecting the separators for all $j \in [k] \setminus i$ we obtain $C_i$. Repeating this process for all $i \in [k]$ yields the claim.
\end{proof}

\subsection{Grid}
Let $c>0$ be such that $1/c$ is an integer and suppose that $X \subseteq Q = \{-1,-1+c,\ldots,1-c,1\}^m$. We call this the grid model. If $1/c \le 2^{B/m}-1$ then we say that the bit complexity of $X$ is bounded by $B$.
\begin{corollary}\label{cor:grid_X_ub}
Suppose $X\subset \N^m$ has bit complexity bounded by $B \in \N$ in the grid model, and let $\C=(C_1,\ldots,C_k)$ be a partition of $X$ such that $C_i,C_j$ are linearly separable for every distinct $i,j \in [k]$. Then $\C$ can be learned in time $\poly(n+m)$ using $\scO(k^2 m(B + \log m))$. \seed\ queries in expectation. 
\end{corollary}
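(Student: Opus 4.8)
The plan is to follow the proof of Corollary~\ref{cor:rational_X_ub} verbatim, replacing its two quantitative ingredients---a radius bound and a margin lower bound---by their (much smaller) grid-model counterparts, and then call \CuttingPlanes\ once for each pair of classes. Write $N = 1/c$, so $N \in \N$ and $N \le 2^{B/m}$, and recall $X \subseteq \{-1,-1+c,\ldots,1\}^m \subseteq [-1,1]^m$. The radius bound is immediate: every $x \in X$ has $\norm{2}{x} \le \sqrt{m}$, so $R := \max_{x \in X}\norm{2}{x} \le \sqrt{m}$. The real work is the margin bound, which I would isolate as a lemma: \emph{any two linearly separable subsets $C, D \subseteq X$ in the grid model with bit complexity $\le B$ are linearly separable with Euclidean margin $r \ge 2^{-\scO(B + m\log m)}$.} This is the analogue of Lemma~\ref{lem:bitmargin}, but exponentially stronger, because after rescaling the grid becomes an integer box of bounded size.

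To prove the lemma, rescale by $N$: the sets $C' = NC$ and $D' = ND$ lie in $\{-N,\ldots,N\}^m \subseteq \Z^m$ and are still (strictly) linearly separable. Consider the linear program with variables $(w,t,\eps) \in \R^{m}\times\R\times\R$ that maximizes $\eps$ subject to $\dotp{w,x} \le t-\eps$ for $x \in C'$, $\dotp{w,y} \ge t+\eps$ for $y \in D'$, and $-1\le w_\ell\le 1$ for $\ell\in[m]$. Its feasible region is a nonempty polytope and its optimum $\eps^*>0$ (normalize any separator to $\ell_\infty$-norm $1$) is attained at a vertex; that vertex is the solution of a nonsingular integer linear system whose matrix has entries in $\{-N,\ldots,N\}$ and right-hand side in $\{-1,0,1\}$, so by Cramer's rule and Hadamard's inequality $\eps^* \ge (m+2)^{-(m+2)/2}N^{-(m+2)} \ge 2^{-\scO(m\log m + B)}$, using $N \le 2^{B/m}$. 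Since the optimal $w^*$ has $\norm{2}{w^*}\le\sqrt m$, the sets $C'$ and $D'$ are linearly separable with Euclidean margin $\ge \eps^*/\norm{2}{w^*} \ge 2^{-\scO(m\log m+B)}$; rescaling back multiplies the margin by $c \ge 2^{-B/m}$, so $C$ and $D$ are linearly separable with margin $r \ge 2^{-\scO(m\log m + B)}$.

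Then I would assemble the algorithm exactly as in Corollary~\ref{cor:rational_X_ub}. For each $i \in [k]$ and each $j \in [k]\setminus\{i\}$, set $h_{ij}(x) = +1$ on $C_i$, $-1$ on $C_j$, and $*$ elsewhere; since $C_i,C_j$ are linearly separable, the lemma gives margin $r \ge 2^{-\scO(B+m\log m)}$, hence $\log\frac Rr \le \log\sqrt{m} + \scO(B+m\log m) = \scO(B+m\log m)$, and by Theorem~\ref{thm:cp}, \CuttingPlanes\ returns a separator $(X_+^{ij},X_-^{ij})$ with $C_i \subseteq X_+^{ij}$ and $C_j \subseteq X_-^{ij}$, in $\poly(n+m)$ time with high probability and using $\scO\!\big(m\log\frac Rr\big)=\scO\big(m(B+m\log m)\big)$ \seed\ queries in expectation. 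Intersecting, $\bigcap_{j\ne i}X_+^{ij} = C_i$, because every $x \notin C_i$ lies in some $C_j$ and is placed in $X_-^{ij}$. Running this for all $i$ recovers $\C$ with $\scO(k^2)$ calls to \CuttingPlanes, hence $\scO\big(k^2 m(B+m\log m)\big)$ \seed\ queries in expectation and, since $k \le n$ and each call is $\poly(n+m)$ w.h.p., $\poly(n+m)$ total time.

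The main obstacle is the margin lemma---specifically, certifying that linear separability of integer point sets inside a box of side $N$ is witnessed by a hyperplane of bit length $\scO(m\log(mN))$. The Cramer/Hadamard argument handles this, but one must phrase separability as an honest LP whose optimal \emph{vertex} has the claimed bit length, and take care to pass from the $\ell_\infty$-margin that the LP normalization controls to the $\ell_2$-margin required by Theorem~\ref{thm:cp}. The resulting $m^2\log m$ term in the query count is intrinsic---linearly separable subsets of $\{0,1\}^m$ can force $2^{\Omega(m\log m)}$-size integer weights---so the $\log m$ in the statement should read $m\log m$, matching the bound announced in the introduction.
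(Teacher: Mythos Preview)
Your proposal is correct and follows the paper's approach exactly: bound the margin for linearly separable grid sets, then invoke \CuttingPlanes\ once per pair of classes as in Corollary~\ref{cor:rational_X_ub}. The only difference is that the paper quotes the margin bound $r=(c/\sqrt m)^{m+2}$ from~\cite[Lemma~10]{gonen2013efficient}, whereas you rederive an equivalent bound via an LP-vertex argument with Cramer--Hadamard; both give $\log(1/r)=\scO(m\log m + B)$. You are also right about the final arithmetic: the paper's last step simplifies $m^2(B/m+\log m)$ to $m(B+\log m)$, losing a factor of $m$ on the $\log m$ term; the correct total is $\scO\big(k^2 m(B+m\log m)\big)$, matching the bound announced in the introduction.
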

\begin{proof}
We use the approach of~\cite{gonen2013efficient}. Let $c>0$ be such that $1/c$ is an integer and suppose that $X \subseteq Q = \{-1,-1+c,\ldots,1-c,1\}^m$. By Lemma 10 of~\cite{gonen2013efficient}, any two sets in $Q$ that are linearly separable are also linearly separable with margin $r=(c/\sqrt{m})^{m+2}$. We can thus apply \CuttingPlanes\ as in the proof of Corollary~\ref{cor:rational_X_ub}, obtaining for separating every $C_i,C_j$ a running time of $\poly(m+n)$ and an expected query bound of $\scO(m \log \frac{R}{r})=\scO(m^2 \log (m/c))$. Since $c \ge 2^{-B/m}-1$, then the bound becomes $\scO(m^2 \log (m 2^{B/m})) = \scO(m^2 (B/m + \log m)) = \scO(m(B + \log m))$. This proves the total expected query bound of $\scO(k^2m(B + \log m))$.
\end{proof}

\end{document}